\journal{Automatica}
\newtheorem{theorem}{Theorem}
\newtheorem{lemma}[theorem]{Lemma}
\newdefinition{ass}{Assumption}
\newdefinition{rem}{Remark}
\newdefinition{defn}{Definition}
\newtheorem{corollary}{Corollary}
\begin{document}	
	\begin{frontmatter}
		\title{No-regret learning for repeated non-cooperative games with lossy 
			bandits\tnoteref{label1}}
		\tnotetext[label1]{The paper was sponsored by Shanghai Sailing Program under 
			Grant Nos. 20YF1453000 and 20YF1452800, the National Natural Science 
			Foundation 
			of 
			China under Grant No. 62003239 and 62003240, and the Fundamental 
			Research Funds 
			for 
			the Central Universities, Shanghai Municipal Science and Technology 
			Major 
			Project 
			No. 2021SHZDZX0100, and Shanghai Municipal Commission of Science and 
			Technology 
			Project No. 19511132101.}
				
		\author[1]{Wenting Liu}\ead{liuwenting@tongji.edu.cn}
		\author[1,2]{Jinlong Lei}\ead{leijinlong@tongji.edu.cn}
		\author[1,2]{Peng Yi\corref{cor1}}\ead{yipeng@tongji.edu.cn}
		\author[1,2]{Yiguang Hong}\ead{yghong@iss.ac.cn}
		
		\cortext[cor1]{Corresponding author}

%
%
		
		\address[1]{Department of Control Science and Engineering,
			Tongji University,
			Shanghai 201804, China}            
		\address[2]{Shanghai Research Institute for
			Intelligent Autonomous Systems, Shanghai 201210, China}

		\begin{abstract}
			This paper considers  no-regret learning for repeated continuous-kernel
			games
			with  lossy bandit
			feedback.
			Since it is difficult to give the explicit model of  the utility  
			functions   in
			dynamic  environments, the players' action can only be learned with
			bandit
			feedback.
			Moreover,  because of unreliable communication channels or privacy 
			protection,
			the bandit
			feedback   may be
			lost  or dropped at random.
			Therefore, we study the asynchronous online learning strategy
			of the players
			to adaptively adjust the next actions for minimizing the  long-term 
			regret loss.
			The paper provides a novel no-regret learning algorithm,
			called Online Gradient Descent with lossy bandits (OGD-lb).
			We first give the regret analysis for  concave games with
			differentiable
			and Lipschitz utilities.
			Then we  show that the action profile converges to a Nash equilibrium 
			with
			probability 1
			when the game is also strictly monotone.
			We further  provide the mean square convergence rate
			{$\mathcal{O}\left(k^{-2\min\{\beta, 1/6\}}\right)$} when
			the game is $\beta-$strongly monotone.
			In addition, we extend the algorithm to the case when the loss
			probability of
			the bandit feedback is unknown, and  prove  its almost sure convergence 
			to Nash 
			equilibrium  for strictly monotone games.
			Finally, we take the  resource management in fog computing as an
			application
			example,	and carry out numerical experiments  to empirically  
			demonstrate the 
			algorithm performance.		
		\end{abstract}

		\begin{keyword}
			Online learning\sep No-regret learning\sep Repeated games\sep  Lossy
			bandits	
		\end{keyword}		
	\end{frontmatter}
	
	
	\section{Introduction}

	Online learning is an effective and necessary method for adaptive 
	decision-making in
	dynamical or antagonistic environments. In this case, the agent usually needs to
	select an action without comprehensive models {and then adapt to the 
		next 
		action 
		based on the feedback information it receives.
		Such online
		learning methods are widely used as a central and canonical solution in 
		various 
		fields such as
		online  recommendation \cite{hazan2016introduction}, traffic  routing 
		\cite{mcmahan2013ad}, network resource allocation,  and  
		market prediction 
		 \cite{LESAGELANDRY2020108771,8027140,shalev2012online}.}
		  The online 
	learning
	algorithms generally are designed to minimize the performance metric known as
	\textit{regret}, which is the difference between the cumulative utility incurred 
	by
	online decisions and that of the best-fixed decision in hindsight. {A 
		learning
		algorithm  performs well if it meets the \textit{no-regret} property, i.e.,
		the increase of the regret is sublinear verse
		time  \cite{xu2020distributed,ZHANG2022110006}. In other words, this 
		property 
		means} that the 
	average 
	accumulated regret
	approaches zero asymptotically.  {A wide range of gradient-based 
		no-regret learning 
		algorithms has been established,} e.g., 
	online mirror descent  \cite{NIPS2006_1cfead99,YUAN2018196}, exponential  
	weights  \cite{arora2012multiplicative},
	follow-the-regularized-leader
	  \cite{kalai2005efficient}, 
	online gradient
	descent  \cite{zinkevich2003online,hazan2007logarithmic,SIMONETTO2021109767},  
	etc.

	When a group of agents are interacting in a dynamic environment, each agent's
	utility is not only influenced by its own action but also affected by the 
	actions of
	its opponents. It is desirable that the self-interested agents can produce ideal
	collective behavior patterns by reaching  equilibrium, and even obtain the best
	performance at the system level. \textit{Game theory} provides tools and 
	frameworks
	for the decision-making of \textit{non-cooperative} agents (also called
	players). 	
	Algorithms for game equilibrium seeking have been studied by different methods
	such as the  alternating direction method of
	multipliers \cite{salehisadaghiani2019distributed,8362703},
	the forward-backward operator 
	splitting \cite{YI2019111,9525284,franci2021training},
	discounted mirror descent \cite{9294126}, the iterative
		Tikhonov regularization \cite{9303804},  and
	average consensus protocol \cite{ye2018nash,ZENG201920}, 
	etc.

	For the online decision-making with non-cooperative agents, the learning process 
	can
	be modeled as playing {\it repeated stage games} among a group of players
	 \cite{cesa2006prediction}. In addition, {according to the different 
		types 
		of feedback 
		information, numerous learning algorithms were developed respectively.}
	In some situations, the
		gradient information or second-order Hessian of the player's own utility 
		function 
		can be obtained after all
	nodes select actions, with which the next round action can be updated for
	maximizing its own utility.
	Since gradient feedback relies on full manipulation of
	its own utility function, it is called \textit{full-information}. 
	For this scenario, various
		convergence and regret analyses  have been derived. For example, A 
		no-regret 
	algorithm is proposed in
	 \cite{gordon2008no}, which guarantees the convergence to the correlated 
	equilibria
	in the repeated convex game.
	 \cite{daskalakis2011near} {focused on zero-sum games and showed that 
		the 
		actions of the players generated by a no-regret algorithm called NoRegretEgt 
		converge to a min-max equilibrium.}
	An adaptive regret-based learning procedure has been
	applied to track the correlated equilibria set of the congestion game
	 \cite{maskery2009decentralized}. When the gradient feedback  is lost randomly,
	 \cite{NEURIPS2018_10c66082} focused on  variationally
		stable games and showed that the online gradient descent algorithm  converge 
		almost surely to 
		the set of Nash equilibria.

	{\it Nevertheless, in many practical problems, utility
		functions
		describing performances
		like service latency or reliability are difficult to model with explicit 
		form in
		dynamic environments. Moreover, some low-power devices cannot run complicated
		models
		such as deep neural network to derive gradients.} Besides, the player may 
	not even be
	aware of the existence of its opponents. In these settings, the only information
	that
	the player can obtain after choosing an action is its utility value, which is 
	known
	as \textit{bandit feedback}. How to making online decision with such limited
	information is our concern.

	With bandit feedback, players need to derive an individual gradient
	estimate from
	the utility value to update the next action. The most commonly used
	methods for gradient estimation can be divided into two
	types: multi-point estimation and single-point estimation. In fact,
	\textit{multi-point
		estimation} techniques have been widely used in various optimization problems
	 \cite{nesterov2017random,8556020,9222230,9216151,cao2021decentralized,zhu2021hessian},
	which are also
	known as zeroth-order oracles. Different from optimization, for the repeated game
	problem, each player's utility is not only related to the actions taken by itself
	but also related to the actions taken by its opponents. When its actions change, 
	the
	actions of its opponents will also change accordingly. Therefore, multi-point
	estimation methods are not applicable or cost too much, hence, 
	\textit{single-point
		estimation }methods such as simultaneous perturbation stochastic 
		approximation
	(SPSA)
	 \cite{spall1997one,5439936,10.5555/1070432.1070486,NEURIPS2020_b3d6e130} are 
	studied
	in the context of non-cooperative games. For example, the work of 
		 \cite{NIPS2017_39ae2ed1} considered  potential games and proved that 
		the exponential weight
		learning program with bandit feedback can achieve a sublinear expected 
		regret 
	and
	converge to  Nash equilibrium.
	 \cite{NEURIPS2018_47fd3c87} showed that in
	monotone concave games, no-regret learning based on mirror descent with bandit
	feedback can converge to a Nash equilibrium with probability 1. With delayed 
	bandit
	feedback in monotone games,  \cite{heliou2020gradient} proposed an algorithm with
	sublinear regret and convergence to a Nash equilibrium.  \cite{shi2019no} showed 
	that the no-regret learning in the Cournot game with bandit feedback can 
	converge to the unique Nash equilibrium.

	However, {\it 
		random loss and drop of the bandit 	feedback can  occur in practical 
			scenarios.} Because the utility value  evaluated by the external dynamic 
		environment  might be
	lost during transmission.
	Moreover, many computing architectures rely on the
	support of communication networks. Once the communication channel is interrupted,
	services and information feedback are dropped. Besides, device mobility can cause
	random variations in channel quality, aggravating communication channel failures
	 \cite{9578933}. In addition, to protect privacy 
	or to perform intermittent queries to  reduce the  query costs, the 
	bandit 
	feedback 
	can be 
	actively dropped at random.
	Compounding bandit feedback with lossy feedback deserves in-depth studies, which 
	is
	still lacking in the literature  \cite{xu2020distributed}.

	Consider \textit{fog computing} as an application example, which is a
	distributed
	computing
	architecture for the Internet of Things (IoT). First of all, in the management of
	resources (including CPU time and storage) in fog computation, online learning 
	for
	adaptive decision-making is an effective and necessary method. On the one hand,
	since the fog computing architecture targets at latency-sensitive IoT 
	applications,
	real-time decision-making through online learning is an effective approach to
	improve user experience. On the other hand, {\it the dynamic or noncooperative 
		opponent
		IoT users are so complicated that we cannot build a comprehensive model}. In 
		this
	case, we adapt the decision through {\it online learning with bandit
		feedback}. Because the utility functions describing device
	latency and reliability in IoT are often difficult to establish explicitly, and
	low-power edge devices in fog computing cannot provide the computing power 
	required
	for gradient computing. Adaptive online decision-making methods have also been
	highlighted in various problems of fog computing, eg, online computation 
	scheduling
	 \cite{xu2020gradient}, computation
	offloading  \cite{shen2019computation}, and resource allocation
	 \cite{hazan2007logarithmic,8882321}.
	An online bandit saddle-point (BanSaP) scheme for IoT management is developed in
	 \cite{chen2018bandit}, which can achieve sublinear dynamic regret and can deal 
	with
	time-varying
	constraints based on multi-point bandits.

	Motivated by the above, we focus on no-regret learning with lossy 
		bandits for 
		repeated continuous-kernel games and take the resource management
		in fog computation as an application example. A preliminary version of the 
		results was presented at the IEEE CDC in 2021
	 \cite{liu2021no}. The current work makes {several} improvements and 
	extensions compared to  \cite{liu2021no}: 
	the major one we would {like to emphasize} 
	is that we derive a
	convergence rate that can reach the same order of bandit feedback in
	 \cite{NEURIPS2018_47fd3c87} without information loss. In addition, we further 
	relax
	the assumption to consider the case where the probability of bandit feedback 
	loss is
	unknown. In this more
	practical and complex case, we demonstrate the convergence of the algorithm, for
	which the analysis is not intuitive. Furthermore, we take fog computation as
	an application example and carry out more simulations   to discuss how loss
	probability influences the
	number of iterations and the times of updates required for the algorithm to 
	reach a
	certain
	accuracy. The main contributions of our work are summarized  as follows:
	\begin{enumerate}[1)]
		\item We propose a  novel
		no-regret learning algorithm capable of online decision-making with lossy
		single-point bandit,
		called \textit{ Online Gradient Descent with lossy bandits
			(OGD-lb)}.
		\item 
		We derive the expected regret bound of the learning algorithm, and show 
		that it conforms to the no-regret property with concave utilities for proper 
		step-sizes.
		\item
		We show that OGD-lb converges to a Nash equilibrium
		with probability 1 for strictly monotone games, and it
		achieves 	
		{$\mathcal{O}\left(k^{-2\min\{\beta, 1/6\}}\right)$} convergence rate
		for $\beta-$strongly monotone games.
		It is worth noting that this convergence rate reaches the same order of
		bandit feedback in  \cite{NEURIPS2018_47fd3c87} without information loss.
		\item We also consider the case when the
		probability of
		bandit feedback loss is unknown. The step-size is set by counting the number
		of players updates up to the current moment. We show
		that
		the algorithm still converges to a Nash equilibrium
		with probability 1 for strictly monotone games.
	\end{enumerate}

	The paper is organized as follows.  We state the
	problem formulation in   Section \ref{formulation} and introduce
	the  algorithm in Section \ref{sec-alg}. The main results  and proofs are 
	provided 
	in Sections 
	\ref{main-res} and \ref{proof}, respectively. Section \ref{simu} presents 
	simulation 
	results. Some concluding remarks are provided in Section
	\ref{conclu}.

	\textit{Notations:} Denote $i\in\mathcal{N}=\{1,2,\dots,N\}$ as the player in 
	the 
	game, and  $k = 1,\dots,K$ as iterations. The indicator function of player $i$ 
	at 
	iteration
	$k$ is denoted by $I_{i}^{k}$. The 
	$m$-dimensional real Euclidean space is denoted by $\mathbb{R}^{m}$.  
	Sets are denoted by calligraphy, i.e.,
	$\mathcal{A},\mathcal{B},\mathcal{C}$, etc. 
	For a column vector $x\in\mathbb{R}^{m}$,  $x^{\top}$ denotes its transpose
	$x^\top y = \left\langle x ,
	y\right\rangle $ denotes the inner product of $x, y$, {and the standard 
		Euclidean 
		norm is denoted by $\|x\|=\sqrt{x^\top x}$. 
		Denote $\|x\|_{\infty}=\max_{1\leq i\leq N}|a_{i}|$  as the 
		max
		norm.} Use $P_{\mathcal{A}}(y)=\arg
	\min_{a\in\mathcal{A}}\|y-x\|$ to represent 
	the projection of $y$
	onto a closed convex set $\mathcal{A}$.   For  functions
	$f, \phi:\mathbb{R}\rightarrow\mathbb{R}^+$, we write 
	$f(x)=\mathcal{O}(\phi(x))$ if 
	$\lim 
	\sup_{x\rightarrow\infty}\left| f(x)/\phi(x)\right| <\infty$,
	and  $f(x)=o(\phi(x))$ if $\lim \sup_{k\rightarrow\infty}f(x)/\phi(x)=0$.

\section{Problem formulation}\label{formulation}

{In this section, a repeated concave
	game is   formulated.    Moreover,    the definition of regret is introduced, 
	which is a   performance metric for the online learning algorithm.}

\subsection{ Repeated Concave Games}\label{concave-game}

The tuple $\mathcal{G} \equiv \mathcal{G}(\mathcal{N},
\mathcal{A} \equiv \prod_{i=1}^{N} \mathcal{A}_{i},
\left\{u_{i}\right\}_{i=1}^{N})$ denotes a utility maximization game, where 
$\mathcal{N}=\left\lbrace1,\dots,N
\right\rbrace $ is the set of $N$ agents/players. 
$\mathcal{A}_{i}\subset \mathbb{R}^{d_{i}}$ is the action space of player $i$. And 
$u_{i}(a)=u_{i}(a_{i},a_{-i}):\mathcal{A} \rightarrow \mathbb{R}$ is player $i$'s
utility function. Let
$a_{-i}=(a_1,\cdots, a_{i-1},a_{i+1},\cdots a_{N})$ and $\mathcal{A}_{-i}=\prod_{j
	\neq i} \mathcal{A}_{j}$  represent
the actions and action space for all
players except $i$, respectively.
The  action profile is denoted as $a=(a_{i},a_{-i})$.
{Our basic assumptions about the utility functions and the action sets 
	are 
	as follows.}

\begin{ass}\label{ass-lip}
	For each player $i\in\mathcal{N}$,
	\begin{enumerate}[i)]
		\item the action set $\mathcal{A}_i$ is 
		closed, convex, and  compact
		with a
		nonempty interior.
		\item $u_{i}(a_{i},a_{-i}) $ is concave and continuously differentiable in 
		$a_i \in \mathcal{A}_{i}$ for any given
		$a_{-i}    \in \mathcal{A}_{-i}$; 
		\item  {$
			g_{i}(a_{i},a_{-i})=\nabla_{a_{i}} u_{i}\left(a_{i} , a_{-i}\right)$ 
			represents 
			the partial gradient of $u_{i}(a_{i},a_{-i})$ with respect to $a_i$,}
		which is 	$L_i$-Lipschitz continuous {in $a\in\mathcal{A} $}, i.e.,
		\begin{align*}
		\left\|g_{i}\left(a\right)-g_{i}(a^{\prime})\right\|
		\leq	L_i\left\|a-a^{\prime}\right\|,\quad \forall 
		a,a^{\prime}\in\mathcal{A}.
		\end{align*}		
	\end{enumerate}		
\end{ass}

Time is slotted as  $k=1,2,\dots$, {we assume that   the players repeatedly play
	the
	game  $  \mathcal{G}(\mathcal{N},
	\mathcal{A} \equiv \prod_{i=1}^{N} \mathcal{A}_{i},
	\left\{u_{i}\right\}_{i=1}^{N})$}.
Each player $i$   {sequentially chooses  its action by learning from the
	available
	feedback information.}
The algorithm for adapting the player's action is called {\it ``online learning"}.

\subsection{Regret of Online Learning Algorithm}

Regret is usually taken as the metric  to measure the performance of online
learning
algorithms. The learning protocol of a given repeated game is as follows:
	At iteration $k$, each player $i$ selects
an action $a_{i,k}$ through a learning algorithm.
Then the external environment such as  the app user market evaluates the current 
action
profile $(a_{i,k},a_{-i,k})$, and returns  the value
$u_{i}\left(a_{i,k},a_{-i,k}\right)$ to player $i$.
The cumulative utility of player $i$ within $K$ iterations is denoted as
$\sum_{k=1}^{K}u_{i}\left(a_{i,k},a_{-i,k}\right)$.
To measure the performance of $\mathcal{A}$, the cumulative utility  is
usually compared with the utility obtained when a best-fixed decision in the
hindsight is taken.  {The regret of node $i$  within
	$K$ iterations is formally defined as}
\[
\mathcal{R}eg^{(i)}\left(K \right)=\max _{a_{i}^{\prime} \in
	\mathcal{A}_{i}} \left\lbrace  \sum_{k=1}^{K} u_{i}\left(a_{i}^{\prime},
a_{-i,k}\right)-
\sum_{k=1}^{K} u_{i}\left(a_{i,k}, a_{-i,k}\right) \right\rbrace.
\]

An online algorithm is  no-regret if and only if the regret is
sublinear as a function of time $K$, $\mathcal{R}eg^{(i)}(K)=o(K)$, i.e.,
\begin{equation}
\lim_{K \rightarrow \infty} \frac{\mathcal{R}eg^{(i)}(K)}{K}=0,\quad \forall i\in
\mathcal{N}.
\end{equation}

\section{Online learning with lossy bandits}\label{sec-alg}

In this section,  {an online learning algorithm was designed, which is
	called} \textit{Online Gradient Descent with lossy bandits} (OGD-lb).

\subsection{Lossy Bandits}
With bandit feedback, the only information available to the  players
is the utility value with a given action. However,
{the utility value may not be available at each stage of  online
	learning. Take the fog computation networks as an example,}
$a)$ the utility may be lost during transmission; $b)$ the interruption of network
connection; $c)$ communication channel
failures due to small-scale fading or IoT device mobility; $d)$ {the player
	performs intermittent queries to  reduce the  query costs; $e)$ the player
	actively drops the utility
	for privacy protection, etc.}

We consider the lossy bandits scenario as  shown in Figure \ref{lossy-fig}.
At iteration $k$, each player $i\in\mathcal{N}$  submits
its \textit{applied 	action} $\hat{a}_{i,k}$ (obtained after perturbing
\textit{intended action}
$a_{i,k}$) to the  market.
However,  some players may not receive the utility value due to  feedback loss, such
as players 1 and 3 in Figure \ref{lossy-fig}. {Then the   players that have
	received the   utilities
	update their  actions,} while those that cannot receive the utility information
keep their actions unchanged. {For a detailed description of this 
	process, please refer to the algorithm introduction in the next subsection.}

\begin{figure}[htbp]
	\centering
	\includegraphics[width=3in]{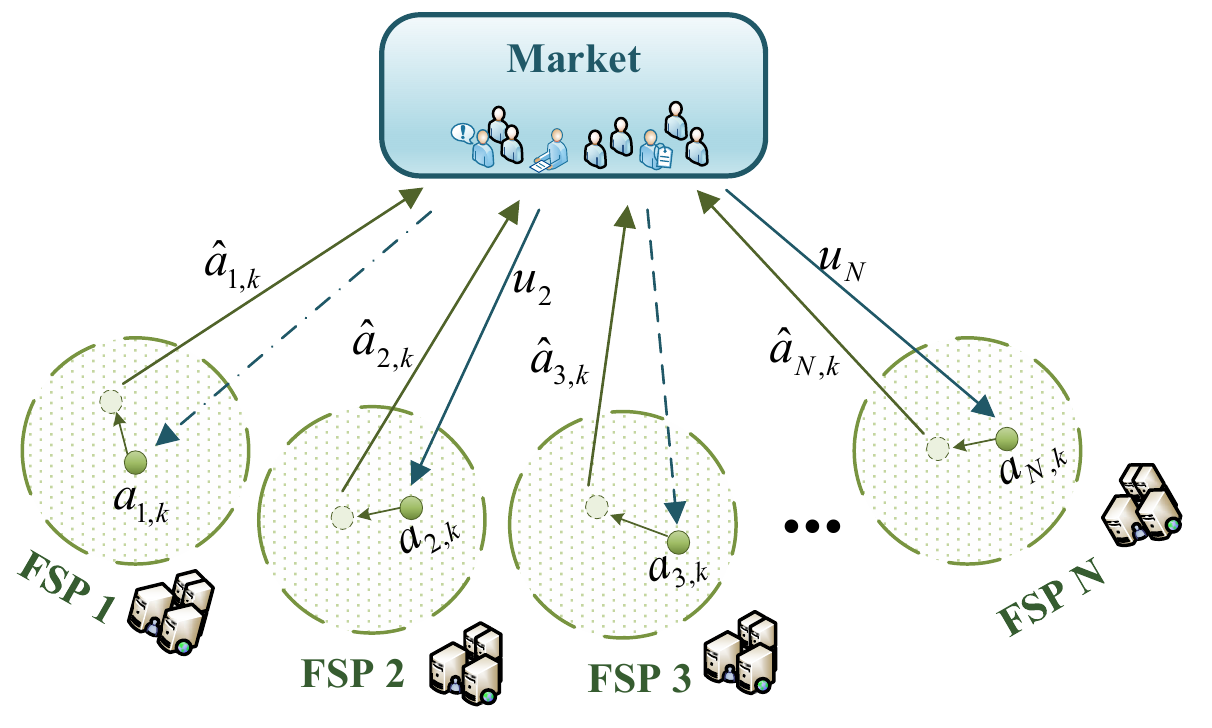}
	\caption{\small Lossy bandits (where FSP stands for fog service provider,  the
		solid line represents the received
		information transmission, and the dashed line represents information loss;
		$a_{i,k}$ is the action of FSP $i$ at iteration $k$; $u_{i}$
		is
		the utility value of FSP $i$).}\label{lossy-fig}
\end{figure}

{Let $p_{i}$ denote the probability that  player $i$  will receive its 
	utility value,  then  the probability of  information loss is $1-p_{i}$.} Denote 
the indicator function as follows:
\begin{align*}
\begin{split}
I_{i}^{k}= \left\{
\begin{array}{lr}
1, \; {\rm if \; the  \; bandit \;  is \;  received;}\\
0, \; {\rm if  \; the \;  bandit \;  is  \; lost.}
\end{array}
\right.
\end{split}		
\end{align*}
Then $\mathbb{E}\left[I_{i}^{k} \right]=p_{i}>0$.

\subsection{Learning Process}

The learning process of our proposed algorithm can be divided into three parts.
The first part is initializing  parameters. The second part is estimating the
gradient
with bandit feedback, in which we use a one-point estimation method
motivating by  simultaneous perturbation stochastic
approximation (SPSA) \cite{spall1997one}. The third part is performing projected 
gradient descent.

At each stage, the players update their actions by the novel algorithm 
	called 
	\textit{Online 
		Gradient Descent with
		lossy bandits} (OGD-lb)  (Algorithm \ref{alg1}), where  the action of 
		player 
	$i$ at stage $k$ is denoted by $a_{i,k}$. A detailed description of the 
	algorithm is shown below and the corresponding position with the 
	\textit{pseudo-code} is
	marked in parentheses.

\begin{itemize}
	\item \textbf{(Initialization)} Set $k=1$, require step-size 
	$\{\gamma_{i,k}\}>0$
	and query radius  $\{\delta_{k}\}>0$ are non-increasing sequences, 
	choose an action
	$a_{i,k}\in \mathcal{A}_{i}$ for each player $i\in\mathcal{N}$.
	\item\textbf{(Line 4-5)}  {Fix a
		$\delta_{k}>0$ and select a
		vector   $\lambda_{i,k}$ from the unit sphere
		$\mathbb{S}_{i}\equiv\mathbb{S}^{d_{i}} \subset \mathbb{R}^{d_i}$
		that is independent with each other at stage $k$.}  To ensure that the
	perturbed point is still in
	the action space $\mathcal{A}_{i}$, we select an interior point $c_{i} $ from $
	\mathcal{A}_{i}$ and let $\mathbb{B}_{r_{i}}\left( c_{i}\right) $ be a
	$r_{i}$-ball centered at $c_{i}\in \mathcal{A}_{i}$ so that
	$\mathbb{B}_{r_{i}}\left( c_{i}\right)\subseteq \mathcal{A}_{i}$. We then take
	\begin{align}\label{safe-net}
	\theta_{i,k}=\lambda_{i,k}-r_{i}^{-1}\left(a_{i,k}-c_{i} \right)
	\end{align}
	as the \textit{perturbation direction}.
	\item\textbf{(Line 6)} Get an \textit{applied action}
	$\hat{a}_{i,k}=a_{i,k}+\delta_{k}\theta_{i,k}=a_{i,k}^{\delta_{k}}+\delta_{k}
	\lambda_{i,k}$ to
	play, where
	\begin{align}\label{new-pivot}
	a_{i,k}^{\delta_{k}}=a_{i,k}-r_{i}^{-1}\delta_{k}\left(a_{i,k}-c_{i}
	\right)
	\end{align}
	with $\delta_{k}/r_{i}<1$.
	Note that it is equivalent to first
	moving each intended action $a_{i,k}$ to $a_{i,k}^{\delta_{k}}$, and then
	perturbing
	along the direction $\lambda_{i,k}$ to get the applied action $\hat{a}_{i,k}$.
	
	\item\textbf{(Line 7-8)} After that, we obtain the utility value
	$\hat{u}_{i,k}=u_{i}\left(\hat{a}_{i,k},\hat{a}_{-i,k}
	\right)$, and {derive an estimated gradient by}
	\begin{align}\label{es-gradient}
	\hat{g}_{i,k}\left({a}_{i,k},{a}_{-i,k}
	\right)=\frac{d_{i}}{\delta_{k}}\hat{u}_{i,k}\lambda_{i,k}.
	\end{align}
	\item\textbf{(Line 9)} Finally, if $I_{i}^{k}=1$,
	player $i$  updates its action $a_{i,k+1}$ by the projected gradient
	method.
	Otherwise, $a_{i,k}$ remains unchanged.
\end{itemize}

In summary, we provide a novel algorithm    OGD-lb,  for which the
pseudo-code is shown in Algorithm 1. 	For convenience, we  abbreviate
$\hat{g}_{i}\left({x}_{i,k},{x}_{-i,k}
\right)$ as $\hat{g}_{i,k}$ in the rest.

\begin{algorithm}\caption{  OGD-lb}\label{alg1}
	\textbf{Require:} step-size
	$\{\gamma_{i,k}\}>0$, query radius
	$\{\delta_{k}\}>0$,
	safety ball $\mathbb{B}_{r_{i}}\left(c_{i}\right) \subseteq
	\mathcal{A}_{i}$   \\
	1: choose $a_{i,k} \in \mathcal{A}_{i}$, iteration $k
	\leftarrow 1$  \\
	2: \textbf{repeat}   \\
	3: \quad \textbf{for} each player $i$ \textbf{do} \\
	4: \quad \quad draw $\lambda_{i,k}$ uniformly from $S_{i} \equiv
	S^{d_{i}}$ of $\mathbb{R}^{d_{i}}$ \\
	5: \quad \quad set $\theta_{i,k} \leftarrow
	\lambda_{i,k}-r_{i}^{-1}(a_{i,k}-c_{i})$  \\
	6: \quad \quad play $\hat{a}_{i,k} \leftarrow a_{i,k}
	+\delta_{k}
	\theta_{i,k}$    \\
	7: \quad \quad receive  $\hat{u}_{i,k}=	u_{i}\left(\hat{a}_{i,k} ,
	\hat{a}_{-i,k}\right)$ \\
	8: \quad\quad set $\hat{g}_{i,k} \leftarrow\left(d_{i} /
	\delta_{k}\right) \hat{u}_{i,k}  \lambda_{i,k}$  \\
	9: \quad \quad update \\
	$$\quad \quad\quad  a_{i,k+1} \leftarrow
	\left\{\begin{array}{ll}
	P_{\mathcal{A}_{i}}\left(a_{i,k} +
	\gamma_{i,k} \hat{g}_{i,k} \right),
	& \text {
		if }
	I_{i}^{k}=1 \\
	a_{i,k}, & \text { if } I_{i}^{k}=0
	\end{array}\right.$$
	10: \quad \textbf{end for}  \\
	11: \quad $k \leftarrow k+1$  \\
	12: \textbf{until} end  	
\end{algorithm}

{ With respect to the independence
	of the perturbation
	sequences $\{\lambda_{i,k}\}$ and  the indicator functions $\{I_{i}^{k}\}$ used 
	in 
	Algorithm 1, the following assumption is made.
}

\begin{ass}\label{ass-zi-ii}
	At each stage $k$, the random variables
	$  \lambda_{i,k} ,  I_{i}^{k}  ,i=1,\cdots, N $ are mutually independent. 
	In addition, for  each 
	$i\in\mathcal{N}$,   $ \{I_{i}^{k}\}_{ k\geq 1}$
	are   independent and
	identically distributed (\textbf{i.i.d.})    across time steps.
\end{ass}

\section{Main Results}\label{main-results}\label{main-res}
In this section,{ the main theoretical results are given, which contain 
}the expected 
regret bound, convergence, and convergence rate with different
algorithm step-size selections.

\subsection{Regret Analysis}

Here, we demonstrate the expected regret bound of Algorithm 
\ref{alg1}  and  show 
that it meets no-regret property. For detailed proofs of Theorem \ref{regret} and 
Corollary
	\ref{no-regret}, please refer to  Section\ref{the1} and
		\ref{cor1}.

\begin{theorem}[Regret bound in expectation]\label{regret}
	Let Assumptions  $\ref{ass-lip}$-$\ref{ass-zi-ii}$ hold.
		Consider the players follow Algorithm 1 with step-size 
	$\gamma_{i,k}=\gamma_{k}p_{i}^{-w}$ 
	and constant
	$w>0$. Suppose that
	$\{\gamma_{k}\}$
	and   $\{\delta_{k}\}$ are  non-increasing sequences
	with
	$\delta_{1}<\min_{i} r_{i}$.
	Then for each 	$i\in \mathcal{N}$, 
	\begin{align}
	\mathbb{E}\left[\mathcal{R}eg^{(i)} (K)\right]
	&\leq
	\frac{p_{i}^{w-1}B_{i}^{2}}{2\gamma_{K}}
	+\sum_{k=1}^{K}B_{i}L_{i}\sqrt{N}\delta_{k}\notag\\
	&+\sum_{k=1}^{K}\frac{p_{i}^{w-1}}{2\gamma_{k}}\frac{G_{i}^{2}}{\delta_{k}^{2}}
	\mathbb{E}\left[\gamma_{i,k}^{2} \right],
	\end{align}
	where 
	$B_{i}=\max_{a_{i}^{\prime},a_{i}\in\mathcal{A}_{i}}\|a_{i}^{\prime}-a_{i}\|$
	{represents} the Euclidean diameter of $\mathcal{A}_{i}$, and
	$G_{i}^{2}=d_{i}^{2} \max_{a
		\in\mathcal{A}}\left|u_{i}(a_{i},a_{-i})\right|^{2}$ is a
	bounded	constant.
\end{theorem}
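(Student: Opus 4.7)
The plan is to split the instantaneous regret into three pieces: a first-order term measured against the expected SPSA gradient, and two approximation errors arising from the one-point smoothing and from the safety-ball pivot shift $a_{i,k}\mapsto a_{i,k}^{\delta_k}$. The first-order term will then be controlled by a projected-gradient descent inequality adapted to the Bernoulli feedback indicator $I_i^k$.

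First I would invoke a Flaxman--Kalai--McMahan-style identity for the one-point estimator. Defining a smoothed surrogate $\bar u_i$ by averaging $u_i$ over the joint perturbation $\delta_k \lambda$ (with the appropriate ball/sphere law for each $\lambda_j$), the computation
\[
\mathbb{E}_{\lambda}\bigl[\hat g_{i,k}\mid a_k\bigr]=\nabla_{a_i}\bar u_i\bigl(a_k^{\delta_k}\bigr)
\]
holds, and concavity of $u_i(\cdot,a_{-i})$ is inherited by $\bar u_i(\cdot,a_{-i}^{\delta_k})$, giving
\[
\bar u_i(a_i',a_{-i,k}^{\delta_k})-\bar u_i(a_{i,k}^{\delta_k},a_{-i,k}^{\delta_k})\le \mathbb{E}_{\lambda}\bigl[\langle \hat g_{i,k},a_i'-a_{i,k}^{\delta_k}\rangle\bigr].
\]
Lipschitz continuity of $u_i$ on the compact set $\mathcal{A}$, together with the bounds $\|\delta_k\lambda\|\le\sqrt{N}\,\delta_k$ and $\|a_{i,k}-a_{i,k}^{\delta_k}\|\le(\delta_k/r_i)B_i$, will absorb $|u_i-\bar u_i|$ and the pivot-shift gap into an $O(L_i\sqrt N\,\delta_k)$ per-round error, yielding the middle sum $\sum_k B_i L_i\sqrt N\,\delta_k$ in the theorem.

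Second, I would write the descent inequality case-wise on $I_i^k$. When $I_i^k=1$, non-expansiveness of $P_{\mathcal{A}_i}$ gives
\[
\|a_{i,k+1}-a_i'\|^{2}\le\|a_{i,k}-a_i'\|^{2}-2\gamma_{i,k}\langle \hat g_{i,k},a_i'-a_{i,k}\rangle+\gamma_{i,k}^{2}\|\hat g_{i,k}\|^{2},
\]
while $I_i^k=0$ makes the squared-distance difference vanish identically. Multiplying the unified inequality by $I_i^k$ and taking conditional expectation with respect to $I_i^k$ (independent of $\hat g_{i,k}$ at stage $k$ by Assumption~\ref{ass-zi-ii}) pulls out a factor $p_i$ on the inner product. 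Dividing by $p_i$ and plugging in $\gamma_{i,k}=\gamma_k p_i^{-w}$ produces the prefactor $p_i^{w-1}/(2\gamma_k)$ on the squared-distance difference, and the residual gradient term $\tfrac{\gamma_{i,k}}{2}\|\hat g_{i,k}\|^{2}$ is bounded via $\|\hat g_{i,k}\|^{2}\le G_i^{2}/\delta_k^{2}$ using Line~8 of Algorithm~\ref{alg1} and the boundedness of $u_i$.

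Third, I would sum over $k=1,\dots,K$ and telescope. Since $1/\gamma_k$ is non-decreasing, an Abel summation combined with the diameter bound $\|a_{i,k}-a_i'\|^{2}\le B_i^{2}$ collapses the leading sum to $p_i^{w-1}B_i^{2}/(2\gamma_K)$, matching the first term of the stated bound, while rewriting $\tfrac{\gamma_{i,k}}{2}G_i^{2}/\delta_k^{2}$ as $\tfrac{p_i^{w-1}}{2\gamma_k}\mathbb{E}[\gamma_{i,k}^{2}]\,G_i^{2}/\delta_k^{2}$ (valid because $\gamma_{i,k}$ is deterministic) recovers the third term; maximising over $a_i'\in\mathcal{A}_i$ is legitimate since the right-hand side is uniform in $a_i'$. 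The main obstacle will be the smoothing bookkeeping: because each player perturbs independently, the expected estimator is the gradient of a function smoothed along \emph{all} coordinates rather than just $a_i$, so the smoothing error naturally carries $\sqrt N$, and the pivot shifts introduce cross terms between $a_{i,k}$, $a_{i,k}^{\delta_k}$ and $a_i'$ that must be absorbed consistently into the single factor $B_iL_i\sqrt N\,\delta_k$. Equally delicate is tracking exactly where Assumption~\ref{ass-zi-ii} is used --- namely that $I_i^k$ is independent of the random gradient estimate at the same stage --- to justify factoring out $p_i$ cleanly.
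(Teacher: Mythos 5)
Your skeleton matches the paper's: the case split on $I_i^k$, non-expansiveness of the projection, conditioning on the Bernoulli indicator to pull out $p_i$, the bound $\|\hat g_{i,k}\|^2\le G_i^2/\delta_k^2$, and the telescoping with the diameter bound are all exactly the paper's steps and correctly deliver the first and third terms of the bound. Where you genuinely diverge is the linearization/smoothing step, and that is where your route has a gap. The paper never compares function values of $u_i$ with a smoothed surrogate: it decomposes $\hat g_{i,k}=g_i(a_k)+\rho_{i,k+1}+\zeta_{i,k}$ as in \eqref{es}, applies concavity of the \emph{true} $u_i(\cdot,a_{-i,k})$ to get $u_i(a_i',a_{-i,k})-u_i(a_{i,k},a_{-i,k})\le\langle g_i(a_k),a_i'-a_{i,k}\rangle$ with no approximation error, and pays the entire smoothing cost at the gradient level via the bias bound $\|\zeta_{i,k}\|\le L_i\sqrt N\,\delta_k$ of Lemma~\ref{spsa}, whence $|\langle\zeta_{i,k},a_{i,k}-a_i'\rangle|\le B_iL_i\sqrt N\,\delta_k$. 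Your Flaxman--Kalai--McMahan route instead uses concavity of $\bar u_i$ and then must control $|u_i-\bar u_i|$ at the function-value level, invoking ``Lipschitz continuity of $u_i$''. Two concrete problems: (i) Assumption~\ref{ass-lip} only makes the partial gradient $g_i=\nabla_{a_i}u_i$ Lipschitz in $a$; it supplies no Lipschitz modulus for $u_i$ as a function of $a_{-i}$, yet your surrogate smooths over \emph{all} coordinates, so the needed bound $|\bar u_i(a_i',a_{-i,k}^{\delta_k})-u_i(a_i',a_{-i,k})|=O(\delta_k)$ does not follow from the stated hypotheses; (ii) even granting joint Lipschitzness of $u_i$, the resulting constant is the Lipschitz constant of $u_i$ times $\sqrt N$, not the $B_iL_i\sqrt N$ that the theorem asserts, since $L_i$ is the Lipschitz constant of the gradient. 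Both issues vanish if you switch to the paper's gradient-level bias decomposition.

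Two smaller remarks. The cross term $\langle\cdot\,,a_{i,k}-a_{i,k}^{\delta_k}\rangle$ is $O(\delta_k)$ only after conditioning on $\mathcal F_k$, because $\mathbb E[\hat g_{i,k}\mid\mathcal F_k]=g_{i,k}^{\delta_k}$ is bounded while $\hat g_{i,k}$ itself is of size $O(1/\delta_k)$; you gesture at the correct ordering of operations but it must be enforced, or this term destroys sublinearity. Finally, your justification for moving the maximum over $a_i'$ past the expectation is at the same level of rigor as the paper's own appeal to Jensen's inequality, so I do not count it against you here.
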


\textit{Theorem \ref{regret}} 
proves that  the number 
of  players $N$, {and the algorithm update parameters} (step-size 
$\gamma_{i,k}$ and perturbation
radius $\delta_{k}$) of player $i\in\mathcal{N}$ at iteration $k$ 
all affect the {expectation-valued} 
regret 
bound of the  Algorithm 1.
Then, {for  some specific} step sizes,  the 
no-regret  
property is proved
in the following corollary.

\begin{corollary}[Expected No-regret]\label{no-regret}
	{Suppose} Assumptions \ref{ass-lip}-\ref{ass-zi-ii} hold. 
		Consider the players follow Algorithm 1 with 
	$\gamma_{i,k}=1/(k^{b}p_{i})$ and
	$\delta_{k}={\delta_{1}}k^{-c} $
	with  {$0<b<1$,  $0<c<b/2$} and {$\delta_{1}<\min_{i} r_{i}$}. Then
	{for each 	$i\in \mathcal{N}$},
	\begin{align}
	\mathbb{E}&\left[\mathcal{R}eg^{(i)} (K)\right]
	\leq \frac{B_{i}^{2}}{2}k^{b}
	+ \frac{B_{i}L_{i}\sqrt{N}{\delta_{1}}}{1-c}K^{1-c}
	\notag\\
	&\quad +\frac{G_{i}^{2}}{2p_{i}^{2}{\delta_{1}^{2}}(1-b+2c)}K^{1-b+2c}
	+B_{i}L_{i}\sqrt{N}{\delta_{1}}
	+\frac{G_{i}^{2}}{2p_{i}^{2}{\delta_{1}^{2}}}. \notag
	\end{align}	
\end{corollary}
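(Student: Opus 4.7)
The plan is to derive the stated bound as a direct specialization of Theorem \ref{regret} to the prescribed step-size and query-radius schedules, and then read off the no-regret property from the exponents of $K$ that appear. The hypothesis $\gamma_{i,k}=1/(k^{b}p_{i})$ matches the form $\gamma_{i,k}=\gamma_{k}p_{i}^{-w}$ used in Theorem \ref{regret} with the identification $w=1$ and $\gamma_{k}=k^{-b}$; in particular $p_{i}^{w-1}=1$, which simplifies the coefficients in the first and third summands of the expectation bound. Moreover, since $\gamma_{i,k}$ is deterministic, $\mathbb{E}[\gamma_{i,k}^{2}]=k^{-2b}p_{i}^{-2}$. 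The condition $\delta_{1}<\min_{i}r_{i}$ is exactly what Theorem \ref{regret} needs, so it applies directly.

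The next step is a bookkeeping calculation. The initial term $p_{i}^{w-1}B_{i}^{2}/(2\gamma_{K})$ becomes $B_{i}^{2}K^{b}/2$. The second summation contributes $B_{i}L_{i}\sqrt{N}\delta_{1}\sum_{k=1}^{K}k^{-c}$, and the third collapses, after cancelling the factor $1/\gamma_{k}=k^{b}$ against $\mathbb{E}[\gamma_{i,k}^{2}]=k^{-2b}p_{i}^{-2}$ and multiplying by $\delta_{k}^{-2}=k^{2c}/\delta_{1}^{2}$, to $(G_{i}^{2}/(2p_{i}^{2}\delta_{1}^{2}))\sum_{k=1}^{K}k^{2c-b}$. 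I would then apply the standard integral comparison $\sum_{k=1}^{K}k^{\alpha}\leq 1+K^{\alpha+1}/(\alpha+1)$, valid whenever $-1<\alpha<0$. With $\alpha=-c\in(-1,0)$ (since $0<c<b/2<1/2$) this produces $1+K^{1-c}/(1-c)$; with $\alpha=2c-b$, the inequalities $-1<-b<2c-b<0$ again hold under $0<c<b/2$ and $0<b<1$, yielding $1+K^{1-b+2c}/(1-b+2c)$. Assembling the three pieces reproduces the inequality stated in the corollary verbatim.

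Sublinearity is then immediate: each of the three non-constant exponents lies strictly in $(0,1)$, namely $b<1$ by hypothesis, $1-c<1$ since $c>0$, and $1-b+2c<1$ precisely because $c<b/2$. Dividing the whole bound by $K$ and letting $K\to\infty$ gives $\lim_{K\to\infty}\mathbb{E}[\mathcal{R}eg^{(i)}(K)]/K=0$, which is the expected no-regret property. The only non-routine aspect of the argument is recognising that the upper bound on $c$ is dictated entirely by the variance of the one-point gradient estimator: without $c<b/2$, the $G_{i}^{2}/\delta_{k}^{2}$ contribution in the third summand would dominate and push the regret rate past linear. This trade-off between the bias term (controlled by $\delta_{k}$, hence by the second summand) and the variance term (controlled by $\delta_{k}^{-2}\gamma_{i,k}$, hence by the third summand) is the only genuine design choice in the proof; the remaining algebra is a mechanical substitution into Theorem \ref{regret}.
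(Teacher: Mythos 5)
Your proposal is correct and follows essentially the same route as the paper: specialize Theorem \ref{regret} with $w=1$ and $\gamma_k=k^{-b}$ (so $\tilde\gamma_{i,k}=0$ and $p_i^{w-1}=1$), bound $\sum_{k=1}^K k^{-c}$ and $\sum_{k=1}^K k^{2c-b}$ by integral comparison, and read off sublinearity from the exponents. The only (cosmetic) difference is that you correctly write the first term as $B_i^2K^{b}/2$, whereas the paper's display has the typo $k^{b}$.
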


\vspace{3mm}
\begin{rem}
	{According to} Corollary \ref{no-regret},  we obtain  that
	$$\mathbb{E}\left[\mathcal{R}eg^{(i)}(K)\right]
	=\mathcal{O}(K^{\max\{b,1-c,1-b+2c\}}).$$
	Thus, $\lim \sup _{K \rightarrow \infty}
	\mathbb{E}\left[\mathcal{R}eg^{(i)}(K)\right]/ K =0,$
	which implies that Algorithm \ref{alg1}
	is  no-regret. Note that it is desirable for the players 
		to follow a 
		no-regret learning algorithm because everyone wishes that the  online 
		strategy he adopted is at least not worse than any static strategy.
	For  example, a regret bound  $K^{3/4}$ can be obtained with 
	$b=3/4$ and
	$c=1/4$, which is a common  bound in the  online learning 
	literature, such
	as \cite{10.5555/1070432.1070486}.
\end{rem}

\subsection{Convergence Analysis}

\begin{defn}\label{defn2}(Nash equilibrium).
	{The profile 
		$a^{*}\in 
		\mathcal{A}$ is a Nash equilibrium for a given  game $\mathcal{G}$} if
	for each $i \in \mathcal{N}$,
	\[u_{i}\left(a_{i}^{*},a_{-i}^{*}\right)\geq u_{i}\left(a_{i},a_{-i}^{*}
	\right),\quad \forall a_{i}\in\mathcal{A}_{i}.\]		
\end{defn}

It is worth noting that a no-regret algorithm cannot
ensure the convergence to the Nash equilibrium in general, for instance,  the
sequence of
actions can converge to the coarsest equilibrium or correlated equilibrium
\cite{cesa2006prediction}. Convergence to a Nash equilibrium is ``considerably
more difficult" because Nash equilibrium is a more stable equilibrium.
To study the convergence of the algorithm, we further restrict the 
	game 
	structure to a  strictly monotone game \cite{rosen1965existence}.
  In the  following, the pseudo-gradient mapping is denoted by 
	$g(a)=(g_{1}(a_{1},a_{-1}),\dots,g_{N}(a_{N},a_{-N}))^\top$.

\begin{ass}\label{defn-monotone}
	{Suppose that $\mathcal{G}$ is} a strictly monotone game on action 
	space
	$\mathcal{A}$, i.e.,
	\[\left\langle g(a)-g(a^{\prime}), a-a^{\prime}\right\rangle  < 0, \quad \forall 
	a,a^{\prime}\in
	\mathcal{A}, a\neq
	a^{\prime}.\]
	
\end{ass}

\begin{rem}\label{rem-vi}
	{ When  the action
		set $\mathcal{A}_{i}$ is convex and compact for each player 
		$i\in\mathcal{N}$, a 
		strictly monotone game}
	admits a unique Nash equilibrium $a^{*}$, which is equivalent to the 
	solution	of 
	the variational inequality\cite{scutari2010convex} 
	\begin{align}\label{vi}
	\sum_{i\in\mathcal{N}}\left\langle g_{i}\left(a_{i}^{*},a_{-i}^{*} \right),
	a_{i}-a_{i}^{*}
	\right\rangle \leq 0, \quad \forall a_{i}\in\mathcal{A}_{i}.
	\end{align}

\end{rem}

The assumption regarding the step-size is  as follows, which can also 
	be found  in the
existing literature, see e.g., \cite{NEURIPS2018_47fd3c87}.

\begin{ass}\label{ass-stepsize}
	For each $i\in\mathcal{N}$, the sequences
	$\{\gamma_{k}\}$ and
	perturbation radius
	$\{\delta_{k}\}$ satisfy { $\delta_{1}<\min_{i} r_{i}$, and}
	
	\[\lim _{k \rightarrow \infty} \gamma_{k}=\lim _{k \rightarrow
		\infty}
	\delta_{k}=0, \quad \sum_{k=1}^{\infty}
	\gamma_{k}=\infty,\]
	\[\sum_{k=1}^{\infty}
	\gamma_{k}
	\delta_{k}<\infty, \quad \sum_{k=1}^{\infty}
	\frac{\gamma_{k}^{2}}{\delta_{k}^{2}}<\infty. \]
\end{ass}

Then, we can obtain the convergence of the algorithm.

\begin{theorem}[Almost sure convergence]\label{the-con}
	\label{convergence}
	Consider
	{Algorithm \ref{alg1} with $\gamma_{i,k}=\gamma_k p_i^{-1}$ }{for all
		$i\in\mathcal{N}$.}
	Let Assumptions \ref{ass-lip}-\ref{ass-stepsize} hold. Then the
	action sequence   $\hat{a}_{k}$ converges to a Nash
	equilibrium with probability 1. 	
\end{theorem}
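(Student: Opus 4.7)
The plan is to combine a Lyapunov analysis on $V_k := \|a_k - a^*\|^2$, where $a^*$ is the unique Nash equilibrium guaranteed by Assumption \ref{defn-monotone} and Remark \ref{rem-vi}, with the Robbins--Siegmund almost supermartingale convergence theorem. First I would rewrite the update uniformly as $a_{i,k+1} = a_{i,k} + I_i^k\bigl[P_{\mathcal{A}_i}(a_{i,k} + \gamma_{i,k}\hat{g}_{i,k}) - a_{i,k}\bigr]$, and use the nonexpansiveness of $P_{\mathcal{A}_i}$ together with $a_i^* \in \mathcal{A}_i$ to obtain
$$
V_{k+1} \leq V_k + 2\sum_{i=1}^{N} I_i^k \gamma_{i,k}\langle \hat{g}_{i,k}, a_{i,k}-a_i^*\rangle + \sum_{i=1}^{N} I_i^k \gamma_{i,k}^2 \|\hat{g}_{i,k}\|^2.
$$

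Next I would condition on the natural filtration $\mathcal{F}_k$. By Assumption \ref{ass-zi-ii}, $I_i^k$ is independent of every $\lambda_{j,k}$, so with $\gamma_{i,k}=\gamma_k/p_i$ the cross term reduces to $2\gamma_k\langle \mathbb{E}[\hat{g}_{i,k}\mid\mathcal{F}_k], a_{i,k}-a_i^*\rangle$. A standard Stokes-type identity for the one-point SPSA estimator identifies $\mathbb{E}[\hat{g}_{i,k}\mid\mathcal{F}_k]$ with the gradient of a $\delta_k$-smoothed utility evaluated at the shifted pivot $a_{i,k}^{\delta_k}$, with the opponents' actions also averaged over their perturbations. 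Using Assumption \ref{ass-lip} (Lipschitz gradient, bounded action sets), this equals $g_i(a_k) + R_{i,k}$ with $\|R_{i,k}\| = \mathcal{O}(\delta_k)$. Boundedness of $u_i$ gives $\|\hat{g}_{i,k}\|^2 \leq G_i^2/\delta_k^2$, so the squared term contributes $\mathcal{O}(\gamma_k^2/\delta_k^2)$ in expectation. Putting everything together,
$$
\mathbb{E}[V_{k+1}\mid\mathcal{F}_k] \leq V_k + 2\gamma_k\langle g(a_k),\, a_k-a^*\rangle + C_1\gamma_k\delta_k + C_2\frac{\gamma_k^2}{\delta_k^2}.
$$
By the variational characterization \eqref{vi}, $\langle g(a^*), a_k-a^*\rangle \leq 0$, and strict monotonicity (Assumption \ref{defn-monotone}) then gives $\langle g(a_k), a_k - a^*\rangle \leq 0$ with strict inequality off $a^*$. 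Assumption \ref{ass-stepsize} makes the two residual sequences summable, so Robbins--Siegmund yields that $V_k$ converges a.s.\ to a finite random limit $V_\infty$ and that $\sum_k \gamma_k\langle -g(a_k), a_k-a^*\rangle < \infty$ a.s.

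To identify the limit, I would note that since $\sum_k\gamma_k = \infty$, there exists (along a random subsequence) $\langle -g(a_{k_j}), a_{k_j}-a^*\rangle \to 0$. By compactness of $\mathcal{A}$ extract a further subsequence with $a_{k_j} \to \bar{a}$. Continuity of $g$ and strict monotonicity force $\bar{a}=a^*$, so $V_\infty = 0$ and $a_k \to a^*$ a.s. Finally $\|\hat a_k - a_k\| = \delta_k\|\theta_k\| \to 0$ on the bounded set $\mathcal{A}$, so $\hat a_k\to a^*$ a.s.

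The main obstacle I anticipate is the bias analysis at the second step: a careful application of the divergence theorem on $\mathbb{S}_i$ to express $\mathbb{E}[\hat g_{i,k}\mid\mathcal{F}_k]$ as the gradient of a smoothed utility, and then a uniform Lipschitz bound that absorbs both the pivot shift $a_{i,k}^{\delta_k} - a_{i,k}$ and the coupled opponent perturbations $\hat{a}_{-i,k}-a_{-i,k}$ into a single $\mathcal{O}(\delta_k)$ term. Handling this cross-coupling cleanly, so that the resulting drift aligns with the true pseudo-gradient $g(a_k)$ up to a summable error, is the delicate part of the argument.
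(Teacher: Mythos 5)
Your proposal is correct and follows the same overall strategy as the paper's proof: a Lyapunov analysis of $\|a_k-a^*\|^2$ via nonexpansiveness of the projection, the SPSA bias bound $\|\zeta_{i,k}\|=\mathcal{O}(\delta_k)$ and second-moment bound $\mathcal{O}(1/\delta_k^2)$ (the paper's Lemma \ref{spsa}), summability of the residual terms under Assumption \ref{ass-stepsize}, the Robbins--Siegmund theorem, and a subsequence argument to identify the limit, followed by $\|\hat a_k-a_k\|=\mathcal{O}(\delta_k)\to 0$.

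The one substantive difference is in how the limit is identified. The paper's Lemma \ref{lemma-convergence} first upper-bounds the drift $\langle g(a_k),a_k-a^*\rangle$ by $\langle g(a^*),a_k-a^*\rangle$ using monotonicity, so Robbins--Siegmund only delivers $\sum_k\gamma_k\langle g(a^*),a^*-a_k\rangle<\infty$, and the paper then concludes $\tilde a=a^*$ from $\langle g(a^*),a^*-\tilde a\rangle=0$. You instead keep $\langle g(a_k),a_k-a^*\rangle$ as the nonpositive drift, obtain $\sum_k\gamma_k\langle -g(a_k),a_k-a^*\rangle<\infty$, and identify the limit from $\langle g(\bar a),\bar a-a^*\rangle=0$ together with \eqref{vi} and strict monotonicity. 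Your variant is the more robust one: when $g(a^*)=0$ (e.g., an interior equilibrium) the quantity $\langle g(a^*),a^*-a\rangle$ vanishes identically and the paper's inference gives no information, whereas your combination $\langle g(\bar a)-g(a^*),\bar a-a^*\rangle<0$ for $\bar a\neq a^*$ plus $\langle g(a^*),\bar a-a^*\rangle\le 0$ genuinely forces $\bar a=a^*$. Everything else --- the uniform rewriting of the asynchronous update with the indicator, the use of the independence in Assumption \ref{ass-zi-ii} to pair $\mathbb{E}[I_i^k]=p_i$ with $\gamma_{i,k}=\gamma_k/p_i$, and the bias analysis you flag as delicate (which is exactly what Lemma \ref{spsa} supplies) --- matches the paper.
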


The results are proved as follows. {Firstly,   with Assumptions
	\ref{ass-lip}-\ref{defn-monotone},
	we prove 	that $D\left(a_{k},a^{*}\right) =\|a_k-a^*\|^2/2$ converges almost
	surely (a.s.) to
	a finite random variable $D_{\infty}$. We then} prove that there  exists a
subsequence
$\{a_{k_{r}}\}$ of $\{a_k\}$ which converges
a.s.  to the Nash equilibrium. Finally, combining the above two results,
we prove  Theorem
\ref{the-con}. Please refer to {Section \ref{theo2}}
for detailed proofs.

\subsection{Rate Analysis}

In order to study the convergence rate of the  proposed algorithm, we further
strengthen the  game structure  into a
$\beta$-strongly monotone game with specific step-sizes.
{For the detailed proof of  Theorem \ref{the2}, please refer to Section
	\ref{theo3}.}

\begin{ass}\label{strongly}
	Suppose that  $\mathcal{G}$ is a $\beta-$strongly monotone game on
	action
	space
	$\mathcal{A}$, i.e.,
	\[\left\langle g\left(a\right)-g(a^{\prime}),
	a-a^{\prime}\right\rangle\leq -\beta \|a-a^{\prime}\|^{2}, \quad \forall 
	a,a^{\prime} \in \mathcal{A}.\]
\end{ass}

\begin{theorem}[Convergence rate in  a mean-squared sense]\label{the2}
	Suppose Assumptions \ref{ass-lip}, \ref{ass-zi-ii}, and \ref{strongly}  hold.
		Consider the players follow Algorithm 1 with  
	$\gamma_{i,k}={1}/({k p_{i}})$  and  	
	$\delta_{k}={\delta_{1}}k^{-1/3} $ with $\delta_{1}<\min_{i}
	r_{i}$.
	Then 
	$$\mathbb{E}\left[\|\hat{a}_{k}-a^{*}\|^{2} \right]
	=\mathcal{O}\left(k^{-2\min\{\beta, 1/6\}}\right).$$	
\end{theorem}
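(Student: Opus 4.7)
The plan is to derive a one-step drift inequality for $D_k := \|a_k-a^*\|^2$, substitute the prescribed schedules, and invoke a standard recursion lemma. Since $a_i^*\in\mathcal{A}_i$, firm non-expansiveness of $P_{\mathcal{A}_i}$ applied to the update $a_{i,k+1}=P_{\mathcal{A}_i}(a_{i,k}+\gamma_{i,k}I_i^k\hat{g}_{i,k})$ gives
\begin{align*}
\|a_{i,k+1}-a_i^*\|^2 &\le \|a_{i,k}-a_i^*\|^2 + 2\gamma_{i,k}I_i^k\langle \hat{g}_{i,k},a_{i,k}-a_i^*\rangle \\
&\quad + \gamma_{i,k}^2(I_i^k)^2\|\hat{g}_{i,k}\|^2.
\end{align*}
Summing over $i$ and conditioning on the natural filtration $\mathcal{F}_k$, Assumption \ref{ass-zi-ii} lets me factor $\mathbb{E}[I_i^k\hat{g}_{i,k}\mid\mathcal{F}_k]=p_i\mathbb{E}[\hat{g}_{i,k}\mid\mathcal{F}_k]$. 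Crucially, with $\gamma_{i,k}=1/(kp_i)$ the $p_i$ factors cancel in the cross term, giving uniform coefficient $2\gamma_k=2/k$; the second-moment term is controlled through the deterministic bound $\|\hat{g}_{i,k}\|\le G_i/\delta_k$ and contributes at most $C_3/(k^2\delta_k^2)$.

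Next I need the bias of $\hat{g}_k$ relative to $g(a_k)$. A Stokes-theorem argument on the unit sphere yields $\mathbb{E}_{\lambda_{i,k}}[\hat{g}_{i,k}]=\nabla_{a_i}\tilde{u}_{i,\delta_k}(a_{i,k}^{\delta_k},\hat{a}_{-i,k})$ for the $\delta_k$-ball smoothing $\tilde{u}_{i,\delta_k}$. Combining this with the $L_i$-Lipschitzness of $g_i$ in Assumption \ref{ass-lip}, the safety-ball bound $\|a_{i,k}^{\delta_k}-a_{i,k}\|\le \delta_k B_i/r_i$ implicit in \eqref{safe-net}, and $\|\hat{a}_{-i,k}-a_{-i,k}\|=\mathcal{O}(\delta_k)$, I obtain the uniform bias estimate $\|\mathbb{E}[\hat{g}_k\mid\mathcal{F}_k]-g(a_k)\|\le C_1\delta_k$. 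The $\beta$-strong monotonicity (Assumption \ref{strongly}) together with the variational characterization $\langle g(a^*),a-a^*\rangle\le 0$ of the Nash equilibrium (Remark \ref{rem-vi}) then gives $\langle g(a_k),a_k-a^*\rangle\le -\beta D_k$.

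Combining the three ingredients yields the contraction-type recursion
$$\mathbb{E}[D_{k+1}\mid \mathcal{F}_k]\le (1-2\beta\gamma_k)D_k + C_2\gamma_k\delta_k + C_3\gamma_k^2/\delta_k^2.$$
Inserting $\gamma_k=1/k$ and $\delta_k=\delta_1 k^{-1/3}$ makes both perturbation terms $\mathcal{O}(k^{-4/3})$, so taking full expectations gives $\mathbb{E}[D_{k+1}]\le (1-2\beta/k)\mathbb{E}[D_k]+C/k^{4/3}$. The standard lemma for scalar recursions $e_{k+1}\le(1-A/k)e_k+B/k^{1+q}$ with $A,q>0$ yields $e_k=\mathcal{O}(k^{-\min\{A,q\}})$; with $A=2\beta$ and $q=1/3$ this gives $\mathbb{E}[D_k]=\mathcal{O}(k^{-\min\{2\beta,1/3\}})=\mathcal{O}(k^{-2\min\{\beta,1/6\}})$. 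The transfer to $\hat{a}_k=a_k+\delta_k\theta_k$ is cheap: $\|\hat{a}_k-a^*\|^2\le 2\|a_k-a^*\|^2+2\delta_k^2\|\theta_k\|^2$ adds only an $\mathcal{O}(k^{-2/3})$ term, which is dominated by $k^{-1/3}$ and thus preserves the claimed rate.

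The main obstacle will be the bias accounting in the middle step: one must carefully verify that the three distinct perturbations buried in the estimator (the deterministic shift $a_{i,k}\to a_{i,k}^{\delta_k}$, the spherical smoothing by $\delta_k\lambda_{i,k}$, and the opponents' own perturbations $\delta_k\theta_{-i,k}$) jointly contribute bias of order $\mathcal{O}(\delta_k)$ only, so that the $\gamma_k\delta_k$ term rather than a cruder $\gamma_k$ term dominates the perturbation side of the recursion; any looser bookkeeping here would destroy the $k^{-1/6}$ ceiling in the exponent.
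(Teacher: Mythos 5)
Your proposal is correct and follows essentially the same route as the paper: the same projection/non-expansiveness drift inequality, the same SPSA bias bound of order $\delta_k$, the same use of $\beta$-strong monotonicity together with the variational characterization of $a^*$ to obtain $\mathbb{E}[D_{k+1}]\le(1-2\beta/k)\mathbb{E}[D_k]+Ck^{-4/3}$, and the same cancellation of the $p_i$ factors via $\gamma_{i,k}=1/(kp_i)$. The only point you compress is the final recursion step: the Chung-type lemma the paper cites applies only when $2\beta>1/3$, so the regime $\beta<1/6$ requires the explicit product-unrolling estimate $\prod_{t\le k}(1-2\beta/t)=\mathcal{O}(k^{-2\beta})$ that the paper carries out as a separate case (your stated $\mathcal{O}(k^{-\min\{A,q\}})$ lemma does encapsulate both cases, but would itself need that argument), while your explicit transfer from $a_k$ to $\hat a_k$ via the $\mathcal{O}(\delta_k^2)$ perturbation is a detail the paper leaves implicit.
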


\subsection{Convergence with Unknown  Lossy Probability}

{In addition, we consider the scenario where $p_{i}$ is unknown.}
Let the step-size be
a function related to the number of updates up to the current time $k$, i.e.,
step-size
$\gamma_{i,k}={1}/{(\Gamma_{i}^{k})^{q}}$ where
$\Gamma_{i}^{k}=\sum_{t=1}^{k}I_{i}^{t}$, $q\in (1/2, 1]$, and
$\mathbb{E}[I_{i}^{k}]=p_{i}>0$ for all $i$ and $k$.
{In this setting,  the  symbol}
$p_{i}$ is only used  for analysis.

{Then, we can obtain the convergence of the algorithm when
	the  loss probability is unknown in advance, for which the proof can be found in
	Section \ref{theo4}}.

\begin{theorem}[Almost sure convergence with unknown $p_i$]\label{the-con-pi}
	Suppose Assumptions \ref{ass-lip}, \ref{ass-zi-ii}, and \ref{defn-monotone}
	hold.
		Consider the players follow Algorithm 1, where
	$\gamma_{i,k}={1}/{(\Gamma_{i}^{k})^{q}}$ with
	$\Gamma_{i}^{k}=\sum_{t=1}^{k}I_{i}^{t}$ and $q\in(1/2, 1]$, and  the
	perturbation radius
	$\{\delta_{k}\}$ satisfies { $\delta_{1}<\min_{i} r_{i}$, and}
	\begin{align} \label{ass-delta}\lim _{k \rightarrow \infty} \delta_{k}=0,~
	\sum_{k=1}^{\infty}
	k^{-q} \delta_{k}<\infty, ~  \sum_{k=1}^{\infty} k^{-2q}
	\delta_k^{-2}<\infty  .\end{align}
	Then	the sequence of realized
	actions $\hat{a}_{k}$ converges to the Nash
	equilibrium with probability 1. 	
\end{theorem}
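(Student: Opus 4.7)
The plan is to adapt the almost-sure convergence argument of Theorem \ref{the-con} by substituting the deterministic step-size with its random counterpart $\gamma_{i,k}=(\Gamma_i^k)^{-q}$ and controlling the discrepancy via the strong law of large numbers. The key observation is that, by Assumption \ref{ass-zi-ii}, the sequences $\{I_i^k\}_{k\geq 1}$ are i.i.d.\ Bernoulli with mean $p_i>0$, so $\Gamma_i^k/k\to p_i$ almost surely; hence on an almost-sure event $\Omega_0$, the random step-size satisfies $\gamma_{i,k}=(p_i+o(1))^{-q}k^{-q}$ for all sufficiently large $k$. In particular, on $\Omega_0$ one has $\gamma_{i,k}\to 0$, $\sum_k\gamma_{i,k}=\infty$, $\sum_k\gamma_{i,k}\delta_k<\infty$, and $\sum_k\gamma_{i,k}^2/\delta_k^2<\infty$ by virtue of \eqref{ass-delta}. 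Pathwise on $\Omega_0$, the step-size therefore satisfies the analog of Assumption \ref{ass-stepsize}, which is precisely the structural property driving Theorem \ref{the-con}.

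Equipped with this, I plan to work on the filtration $\mathcal{F}_k=\sigma(I_i^t,\lambda_{i,t}:1\leq t\leq k,\,i\in\mathcal{N})$, noting that $\Gamma_i^k$ and hence $\gamma_{i,k}$ is $\mathcal{F}_k$-measurable. Following the nonexpansiveness-based expansion used for Theorem \ref{the-con}, one arrives, for any Nash equilibrium $a^*$, at an inequality of the form
\[
\|a_{k+1}-a^*\|^2\leq\|a_k-a^*\|^2-2\sum_{i\in\mathcal{N}}I_i^k\gamma_{i,k}\langle\hat{g}_{i,k},a_{i,k}-a_i^*\rangle+\sum_{i\in\mathcal{N}}I_i^k\gamma_{i,k}^2\|\hat{g}_{i,k}\|^2.
\]
Taking conditional expectation and invoking the standard bias/variance bounds for the SPSA-type estimator $\hat{g}_{i,k}$ (the same ones used in proving Theorem \ref{the-con}) together with the strict monotonicity from Assumption \ref{defn-monotone} then yields a Robbins--Siegmund-type recursion whose perturbation terms are summable on $\Omega_0$.

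The main obstacle is the coupling between the random step-size and the gradient-estimate noise: $\gamma_{i,k}$ depends on $I_i^1,\dots,I_i^k$, while $\hat{g}_{i,k}$ depends on $\lambda_{i,k}$ and on the applied actions at stage $k$, so the clean decoupling used when $p_i$ is known no longer applies directly. The remedy I have in mind is to condition on the enlarged $\sigma$-algebra $\mathcal{F}_{k-1}\vee\sigma(I_1^k,\dots,I_N^k)$: once $I_i^k$ is revealed, $\gamma_{i,k}$ becomes measurable and may be pulled outside the inner expectation, after which the independence of $\lambda_{i,k}$ from the indicators (Assumption \ref{ass-zi-ii}) lets us reproduce the per-stage bound of Theorem \ref{the-con} with $\gamma_{i,k}$ in place of its deterministic counterpart. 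A second application of the tower property restores the $\mathcal{F}_{k-1}$-conditional inequality needed for Robbins--Siegmund.

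Finally, applying Robbins--Siegmund on $\Omega_0$ produces two conclusions: $\|a_k-a^*\|^2$ converges almost surely to a finite random limit $D_\infty$, and $\sum_k\gamma_{i,k}\langle g(a_k),a_k-a^*\rangle$ is summable. Because $\sum_k\gamma_{i,k}=\infty$ on $\Omega_0$ and the inner product is strictly negative whenever $a_k\neq a^*$, a standard subsequence extraction yields some $\{a_{k_r}\}$ converging almost surely to $a^*$; combined with the a.s.\ convergence of $\|a_k-a^*\|^2$, this forces $D_\infty=0$, so $a_k\to a^*$ almost surely. Since $\|\hat{a}_k-a_k\|\leq C\delta_k\to 0$ by construction, the realized-action sequence $\{\hat{a}_k\}$ converges almost surely to the Nash equilibrium as well.
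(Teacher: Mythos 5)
Your overall architecture --- a Robbins--Siegmund recursion for a weighted distance to $a^{*}$, followed by subsequence extraction via $\sum_k \gamma_k = \infty$ and strict monotonicity, and finally $\|\hat a_k - a_k\| = \mathcal{O}(\delta_k) \to 0$ --- is exactly the paper's skeleton. However, there is a genuine gap in how you control the random step-size, and it sits at the one point where this theorem actually differs from Theorem \ref{the-con}.

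The issue is that strict monotonicity (Assumption \ref{defn-monotone}) only controls the \emph{unweighted} sum $\sum_{i}\langle g_i(a_k)-g_i(a^{*}), a_{i,k}-a_i^{*}\rangle$. In the known-$p_i$ case the drift term acquires a coefficient $\mathbb{E}[I_i^k]\gamma_{i,k} = p_i\cdot \gamma_k p_i^{-1} = \gamma_k$ that is \emph{common to all players}, which is what lets Lemma \ref{lemma-convergence} invoke monotonicity. With $\gamma_{i,k}=(\Gamma_i^k)^{-q}$ the per-player coefficients are random and heterogeneous, so one must decompose $\gamma_{i,k} = k^{-q}p_i^{-q} + \tilde\gamma_{i,k}$, use the common part for the monotone drift, and dump the residual into an error term of the form $\sum_i C_i\,|\tilde\gamma_{i,k}|$, where $C_i$ bounds $|\langle g_i(a_k), a_{i,k}-a_i^{*}\rangle|$ and does \emph{not} vanish with $k$. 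Robbins--Siegmund therefore requires $\sum_k |\tilde\gamma_{i,k}| < \infty$ a.s. Your proposal instead argues that, by the SLLN, $\gamma_{i,k}=(p_i+o(1))^{-q}k^{-q}$ pathwise satisfies "the analog of Assumption \ref{ass-stepsize}". That is true but insufficient: the SLLN only yields $\tilde\gamma_{i,k}=o(k^{-q})$, and for $q\in(1/2,1]$ a sequence that is $o(k^{-q})$ need not be summable (e.g. $k^{-q}/\log k$). This is precisely why the paper imports the quantitative bound of Lemma \ref{lemma7} (from \cite{5719290}), which gives the law-of-iterated-logarithm--type rate $|\tilde\gamma_{i,k}|\le 2q\epsilon\, p_i^{-2} k^{-(1/2+q-\sigma)}$ for all large $k$ a.s.; since $1/2+q-\sigma>1$ for $q>1/2$ and $\sigma<1/2$, this residual \emph{is} summable. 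Without such a rate, the Robbins--Siegmund hypotheses are not verified and the argument stalls. Your enlarged-filtration device for pulling $\gamma_{i,k}$ out of the inner expectation is a reasonable way to handle the measurability coupling (the paper handles it by leaving $\mathbb{E}[|\tilde\gamma_{i,k}|\mid\mathcal{F}_k]$ and $\mathbb{E}[\gamma_{i,k}^2\mid\mathcal{F}_k]$ inside Lemma \ref{lemma-convergence}), but it does not repair the missing summability estimate.
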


\section{Proof of Main Results}\label{proof}

In this part, we provide detailed proofs  {corresponding to} the main 
results 
established in 
Section~\ref{main-results}.

\subsection{Preliminary Analysis}

Let $\mathcal{F}_{k}$ be  a
$\sigma-$algebra  of random variables up
to stage $k$, i.e., {$\mathcal{F}_{k}\triangleq\sigma\left\lbrace a_{i,1},
	\lambda_{i,s},	I_{i}^{s}, i\in\mathcal{N},  1\leq s\leq k-1\right\rbrace$}. We
denote
\begin{align}\label{es}
\hat{g}_{i,k}=g_{i}\left(a_{k}\right)+\rho_{i,k+1}+\zeta_{i,k}
\end{align}
where \begin{align}
&\rho_{i,k+1}=\hat{g}_{i,k}-\mathbb{E}\left[\hat{g}_{i,k} \mid
\mathcal{F}_{k}\right],\label{ui}
\\&\zeta_{i,k}	=\mathbb{E}\left[\hat{g}_{i,k} \mid
\mathcal{F}_{k}\right]-g_{i}\left(a_{k}\right)\label{bi}
\end{align}
are noise term  and systematic bias respectively.
Then,  {a lemma of SPSA estimator is introduced as follows.}
\begin{lemma}\label{spsa} \cite[Lemma
	4]{NEURIPS2018_47fd3c87} Let Assumption \ref{ass-lip}  
	holds.	
	Then
	the
	SPSA estimator $(\hat{g}_{i,k})_{i\in\mathcal{N}}$ satisfies
	\begin{align}\label{g_delta}
	\mathbb{E}\left[\hat{g}_{i,k}\mid\mathcal{F}_{k}\right]=g_{i,k}^{\delta_k}
	=\nabla_{a_{i,k}^{\delta_k}}
	u_{i}^{\delta_k}(a_{i,k}^{\delta_k},a_{-i,k}^{\delta_k}),
	\end{align}
	where $u_{i}^{\delta_k}(a_{i,k}^{\delta_k},a_{-i,k}^{\delta_k})$ is a 	
	$\delta$-smooth utility
	function\footnote{The $\delta$-smoothed utility
		function
		$u_{i}^{\delta}(a_{i},a_{-i})=\frac{1}{\operatorname{vol}\left(
			\mathbb{B}_{i}\right) \prod_{j \neq i} \operatorname{vol}\left(
			\mathbb{S}_{j}\right)}\\
		\int_{ \mathbb{B}_{i}} \int_{\prod_{j \neq i}
			\mathbb{S}_{j}} u_{i}\left(a_{i}+\delta_{k}\theta_{i,k} ;
		a_{-i}+\delta_{k}\lambda_{-i,k}\right) d \lambda_{1}
		\cdots d \theta_{i} \cdots d \lambda_{N}.$} and $a_{i,k}^{\delta_k}$ is 
	defined in
	equation (\ref{new-pivot}). In addition, we have
	\begin{align}\label{bik}
	\|\zeta_{i,k}\|\leq   L_{i}\sqrt{N}\delta_{k}
	\end{align}
	and the second moment of the noise term $\rho_{i,k+1}$ is
	$\mathcal{O}\left(1 / \delta_{k}^{2}\right).$
\end{lemma}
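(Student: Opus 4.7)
The lemma has three assertions to establish about the one-point estimator $\hat g_{i,k}=(d_i/\delta_k)\,u_i(\hat a_{i,k},\hat a_{-i,k})\,\lambda_{i,k}$: (i) conditional unbiasedness with respect to the gradient of a $\delta$-smoothed utility, (ii) an $O(\sqrt{N}\delta_k)$ bias bound for $\zeta_{i,k}$, and (iii) an $O(1/\delta_k^2)$ second-moment bound for the martingale noise $\rho_{i,k+1}$. My plan is to treat the three pieces separately, using only Assumption~\ref{ass-lip} (compactness of $\mathcal{A}_i$, differentiability, and $L_i$-Lipschitz continuity of $g_i$) together with Assumption~\ref{ass-zi-ii} (mutual independence of the perturbations $\lambda_{i,k}$ across players at each stage).

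\textbf{Step 1 (conditional unbiasedness).} The key observation is that, by construction, the applied profile can be written as $\hat a_{j,k}=a_{j,k}^{\delta_k}+\delta_k\lambda_{j,k}$ for every player $j$, where the pivots $a_{j,k}^{\delta_k}$ are $\mathcal{F}_k$-measurable. Conditioning on $\mathcal{F}_k$ and using independence of the $\lambda_{j,k}$'s, the conditional expectation of $\hat g_{i,k}$ factorises as an integral over $\lambda_i\in\mathbb{S}^{d_i}$ of the average of $u_i(a_{i,k}^{\delta_k}+\delta_k\lambda_i,\,a_{-i,k}^{\delta_k}+\delta_k\lambda_{-i})$ over the other players' spheres. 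Applying the standard Stokes/divergence identity for the unit-sphere one-point estimator, namely $\mathbb{E}_{\lambda\sim\mathbb{S}^{d_i}}[(d_i/\delta)\,f(x+\delta\lambda)\lambda]=\nabla_x\bar f^{\,\delta}(x)$ where $\bar f^{\,\delta}(x)=\mathbb{E}_{v\sim\mathbb{B}^{d_i}}[f(x+\delta v)]$, converts the $\lambda_i$-integral into $\nabla_{a_i}$ of the ball-averaged function. Combining the ball average in $a_i$ with the already-present sphere averages over the opponents recovers exactly the $\delta$-smoothed utility $u_i^{\delta_k}$ defined in the footnote, and gives \eqref{g_delta}.

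\textbf{Step 2 (bias bound).} Starting from $\zeta_{i,k}=g_{i,k}^{\delta_k}-g_i(a_k)$, I would bring the gradient inside the smoothing expectation to write $g_{i,k}^{\delta_k}=\mathbb{E}[g_i(a_{i,k}^{\delta_k}+\delta_k v_i,\,a_{-i,k}^{\delta_k}+\delta_k\lambda_{-i})]$ with $v_i\sim\mathbb{B}^{d_i}$ and $\lambda_j\sim\mathbb{S}^{d_j}$ for $j\neq i$. Applying Jensen's inequality and the $L_i$-Lipschitz bound in Assumption~\ref{ass-lip}(iii) to the difference between this average and $g_i(a_k)$ gives a bound proportional to the expected distance between the perturbed profile and $a_k$. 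Since each coordinate block is displaced by at most $O(\delta_k)$ (both by the pivot shift in \eqref{new-pivot} and by the $\delta_k$-sized random perturbation), summing over the $N$ blocks yields the claimed $L_i\sqrt{N}\delta_k$ bound in~\eqref{bik}.

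\textbf{Step 3 (second moment).} Since $\mathcal{A}$ is compact and each $u_i$ is continuous, $U_i:=\max_{a\in\mathcal{A}}|u_i(a)|<\infty$, so $\|\hat g_{i,k}\|\le (d_i/\delta_k)\,U_i$ pointwise. Therefore $\mathbb{E}[\|\hat g_{i,k}\|^2\mid\mathcal{F}_k]\le d_i^2 U_i^2/\delta_k^2$, and the bound on $\mathbb{E}[\|\rho_{i,k+1}\|^2\mid\mathcal{F}_k]$ follows from the conditional-variance inequality $\mathbb{E}[\|\rho_{i,k+1}\|^2\mid\mathcal{F}_k]\le\mathbb{E}[\|\hat g_{i,k}\|^2\mid\mathcal{F}_k]$. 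The main obstacle is really only Step~1: one must carefully handle the fact that perturbations are sampled on spheres for every player (so the \emph{single} player whose gradient is being estimated must be re-expressed as a ball average via the Stokes identity), and verify that the combined smoothing coincides with the $u_i^{\delta_k}$ stated in the lemma. Steps~2 and~3 are routine consequences of Lipschitzness and boundedness.
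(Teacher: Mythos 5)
The paper does not actually prove this lemma --- it imports it verbatim by citation from \cite[Lemma 4]{NEURIPS2018_47fd3c87} --- so there is no in-paper argument to compare against; what you have done is reconstruct the proof of the cited result, and your reconstruction follows the standard route correctly. Step 1 is the right (and only nontrivial) idea: condition on $\mathcal{F}_{k}$, use the mutual independence of the $\lambda_{j,k}$ to reduce the opponents' perturbations to sphere averages, and apply the Flaxman--Kalai--McMahan/Stokes identity to player $i$'s own sphere perturbation to turn $(d_i/\delta_k)\,\mathbb{E}[f(x+\delta_k\lambda)\lambda]$ into the gradient of the ball average, which is exactly the $u_i^{\delta_k}$ of the footnote evaluated at the $\mathcal{F}_k$-measurable pivots $a_{j,k}^{\delta_k}$. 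Steps 2 and 3 are routine as you say. The one place to be slightly more careful is the constant in \eqref{bik}: the displacement of block $j$ is $\delta_k\|\theta_{j,k}\|$ with $\theta_{j,k}=\lambda_{j,k}-r_j^{-1}(a_{j,k}-c_j)$, so it is bounded by $\delta_k(1+B_j/r_j)$ rather than by $\delta_k$ alone; the clean bound $L_i\sqrt{N}\delta_k$ therefore either absorbs this $O(1)$ factor into the constant or presumes the pivot shift is negligible. This is an imprecision inherited from the cited lemma itself, not a gap introduced by your argument, but it is worth flagging if one wants the stated constant literally.
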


\begin{rem}
	When the perturbation radius $\delta_{k} \rightarrow 0,$ the bias will decrease
	to zero, but the noise will increase to
	infinity. Therefore, there is a bias-variance tradeoff between the bias and noise
	variance. Thus, the perturbation radius $\delta_{k}$  should be  selected
	carefully.
	
\end{rem}

In the following, we   present a preliminary  lemma  that
will be used for convergence analysis.

\begin{lemma}\label{lemma-convergence}
	Suppose Assumptions \ref{ass-lip}-\ref{defn-monotone} hold. Define $D_{k}=\sum_{i
		\in  \mathcal{N}}p_{i}^{w-1}D_{i,k}$ {with}
	{$D_{i,k}=\frac{1}{2}\|a_{i,k}- a_i^{\prime}\|^{2}$ for $ a_i^{\prime} \in
		\mathcal{A}_i$}, and
		$\tilde{\gamma}_{i,k}=\gamma_{i,k}-\gamma_{k}p_{i}^{-w}$ with  a
		constant  $w>0$.  Then
	\begin{align}
	\mathbb{E}\left[D_{k+1}\mid \mathcal{F}_{k} \right] 
	&\leq D_{k}
	+ \gamma_{k}{\left\langle g (a^{'}), a_k-a^{'} \right\rangle}
	+\sum_{i \in
		\mathcal{N}}B_{i}L_{i}\sqrt{N}\gamma_{k}\delta_{k}
	\notag\\
	&+\sum_{i \in
		\mathcal{N}}({C_i}+B_{i}L_{i}\sqrt{N}\delta_{k})p_{i}^{w-1}
	\mathbb{E}\left[
	\left|\tilde{\gamma}_{i,k}
	\right|\mid
	\mathcal{F}_{k} \right]
	\notag\\
	&+\sum_{i
		\in
		\mathcal{N}}\frac{G_{i}^{2}p_{i}^{w-1}}{2\delta_{k}^{2}}\mathbb{E}\left[\gamma_{i,k}^{2}\mid\mathcal{F}_{k}
	\right], \quad {\forall a^{'}\in \mathcal{A}}, \notag
	\end{align}
	where ${C_i}$ is some scalar {satisfying  $\left| \left\langle
		g_{i}(a ),a_{i }- a_i^{\prime}\right\rangle \right|
		\leq {C_i}$  for any $a\in \mathcal{A}  $ and $ a_i^{\prime} \in
		\mathcal{A}_i$ ,}
	$B_{i}=\max_{a_{i}^{\prime},a_{i} \in\mathcal{A}_{i}}\|a_{i}^{\prime}-a_{i}\|$
	is the Euclidean diameter of $\mathcal{A}_{i}$,
	$L_{i}$ denotes the Lipschitz constant and 	$G_{i}^{2}=d_{i}^{2}
	\max_{a \in\mathcal{A}}\left|u_{i}(a_{i},a_{-i})\right|^{2}$ is a
	bounded	constant.
\end{lemma}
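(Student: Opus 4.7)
The plan is to derive a one-step contraction on $D_{i,k}$ from the non-expansiveness of the projection, then weight by $p_i^{w-1}$, take conditional expectation, and carefully split the step-size into its nominal and deviation parts so that the four terms on the right-hand side fall out.

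First I would write the update uniformly as $a_{i,k+1} = P_{\mathcal{A}_i}(a_{i,k} + I_i^k \gamma_{i,k} \hat{g}_{i,k})$, which is legitimate because $a_{i,k} \in \mathcal{A}_i$ is a fixed point of the projection and so the ``no update'' branch is captured by $I_i^k = 0$. Applying non-expansiveness against the test point $a_i' \in \mathcal{A}_i$, expanding the square, and halving gives
\begin{equation*}
D_{i,k+1} \leq D_{i,k} + I_i^k \gamma_{i,k} \langle \hat{g}_{i,k}, a_{i,k} - a_i' \rangle + \tfrac{1}{2}(I_i^k \gamma_{i,k})^2 \|\hat{g}_{i,k}\|^2.
\end{equation*}
I would multiply by $p_i^{w-1}$, sum over $i$, and take $\mathbb{E}[\cdot \mid \mathcal{F}_k]$.

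The algebraic heart of the proof is to decompose $\gamma_{i,k} = \gamma_k p_i^{-w} + \tilde{\gamma}_{i,k}$ inside the cross term. For the nominal piece, Assumption \ref{ass-zi-ii} makes $I_i^k$ independent of $\hat{g}_{i,k}$ given $\mathcal{F}_k$, and Lemma \ref{spsa} identifies $\mathbb{E}[\hat{g}_{i,k} \mid \mathcal{F}_k]$ with $g_{i,k}^{\delta_k}$; the weights then collapse beautifully via $p_i^{w-1} \cdot p_i^{-w} \cdot \mathbb{E}[I_i^k] = 1$, so summing over $i$ produces $\gamma_k \langle g^{\delta_k}(a_k), a_k - a' \rangle$. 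Invoking the bias bound $\|g_{i,k}^{\delta_k} - g_i(a_k)\| \leq L_i \sqrt{N} \delta_k$ together with Cauchy--Schwarz and $\|a_{i,k} - a_i'\| \leq B_i$ converts this into $\gamma_k \langle g(a_k), a_k - a' \rangle + \sum_i B_i L_i \sqrt{N} \gamma_k \delta_k$. Applying strict monotonicity (Assumption \ref{defn-monotone}) in the form $\langle g(a_k) - g(a'), a_k - a' \rangle \leq 0$ then upgrades $g(a_k)$ to $g(a')$, yielding the first two terms of the claim.

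The deviation piece is where the main obstacle lies, because $\tilde{\gamma}_{i,k}$ may depend on $I_i^k$ (this is exactly the regime needed for the unknown-loss-probability analysis of Theorem \ref{the-con-pi}). The key observation is that $\hat{g}_{i,k}$ depends on $\mathcal{F}_k$ only through the perturbations $\{\lambda_{j,k}\}$, which by Assumption \ref{ass-zi-ii} are independent of every $I_i^k$; hence $\mathbb{E}[\hat{g}_{i,k} \mid \mathcal{F}_k, I_i^k] = g_{i,k}^{\delta_k}$. Conditioning in two steps and using that $g_{i,k}^{\delta_k}$ is $\mathcal{F}_k$-measurable, the deviation inner product reduces to $\langle g_{i,k}^{\delta_k}, a_{i,k} - a_i' \rangle \, \mathbb{E}[I_i^k \tilde{\gamma}_{i,k} \mid \mathcal{F}_k]$, whose magnitude is bounded by $(C_i + B_i L_i \sqrt{N} \delta_k)\, \mathbb{E}[|\tilde{\gamma}_{i,k}| \mid \mathcal{F}_k]$ using the uniform scalar bound $|\langle g_i(a_k), a_{i,k} - a_i' \rangle| \leq C_i$ and the bias estimate again.

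Finally, for the quadratic term I would use $(I_i^k)^2 \leq 1$ together with the pathwise bound $\|\hat{g}_{i,k}\|^2 \leq G_i^2/\delta_k^2$ coming directly from $\|\lambda_{i,k}\| = 1$ and $|\hat{u}_{i,k}|^2 \leq \max_{a \in \mathcal{A}} |u_i(a)|^2$ on the compact action set. This yields $\sum_i \frac{G_i^2 p_i^{w-1}}{2\delta_k^2} \mathbb{E}[\gamma_{i,k}^2 \mid \mathcal{F}_k]$, the fourth contribution. Assembling the four estimates completes the proof; the only delicate point is the conditional independence argument for $\tilde{\gamma}_{i,k}$ versus $\hat{g}_{i,k}$, which must be stated carefully to cover both known and unknown $p_i$ step-size designs.
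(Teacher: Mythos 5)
Your proposal is correct and follows essentially the same route as the paper's proof: non-expansiveness of the projection, the split $\gamma_{i,k}=\gamma_k p_i^{-w}+\tilde{\gamma}_{i,k}$, conditional expectation using the independence of $\{\lambda_{j,k}\}$ and $I_i^k$ together with the SPSA bias/second-moment bounds of Lemma \ref{spsa}, and strict monotonicity to pass from $g(a_k)$ to $g(a')$. The only cosmetic differences are that you fold both branches of the update into a single projection via $I_i^k$ and work directly with $\mathbb{E}[\hat{g}_{i,k}\mid\mathcal{F}_k]=g_{i,k}^{\delta_k}$ rather than the paper's explicit $\rho_{i,k+1}/\zeta_{i,k}$ decomposition, and your treatment of the possible dependence of $\tilde{\gamma}_{i,k}$ on $I_i^k$ is, if anything, stated more carefully than in the paper.
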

\begin{proof}[\textbf{Proof}]
	{Note by Algorithm \ref{alg1} and the definition of
		$\sigma-$algebra  that $a_{k}$
		is  adapted to $\mathcal{F}_{k}$. Since
		$\rho_{i,k+1}$ is an $\mathcal{F}_{k}-$adapted martingale difference
		sequence by \eqref{ui},}   we have	 that 
	\begin{align}\label{distance}
	D_{i,k+1}&=D(a_{i,k+1},a_i^{\prime})\notag\\
	&=\frac{1}{2}\|a_{i,k+1}-a_i^{\prime}\|^{2}\mathbf{1}_{\left\{I_{i}^{k}=1\right\}}
	+\frac{1}{2}\|a_{i,k}-a_i^{\prime}\|^{2}\mathbf{1}_{\left\{I_{i}^{k}=0\right\}}\notag\\
	&\overset{(a)}{\leq}\left(
	\frac{1}{2}\|a_{i,k}+\gamma_{i,k}\hat{g}_{i,k}-a_i^{\prime}\|^{2}\right)
	\mathbf{1}_{\left\{I_{i}^{k}=1\right\}}\notag\\
	&\quad\quad+\frac{1}{2}\|a_{i,k}-a_i^{\prime}\|^{2}\mathbf{1}_{\left\{I_{i}^{k}=0\right\}}\notag\\
	&=\left( \frac{1}{2}\|a_{i,k}-a_i^{\prime}\|^{2}+\gamma_{i,k}\left\langle
	\hat{g}_{i,k},a_{i,k}-a_i^{\prime} \right\rangle
	+\frac{1}{2}\gamma_{i,k}^{2}\|\hat{g}_{i,k}\|^{2}\right) \notag\\
	&\quad\quad\mathbf{1}_{\left\{I_{i}^{k}=1\right\}}
	+\frac{1}{2}\|a_{i,k}-a_i^{\prime}\|^{2}\mathbf{1}_{\left\{I_{i}^{k}=0\right\}}
	\notag\\	
	&\overset{{\eqref{es}}}{=}D_{i,k}+\left( \gamma_{i,k}\left\langle
	g_{i}(a_{k})+\rho_{i,k+1}+\zeta_{i,k}, a_{i,k}-a_i^{\prime}\right\rangle
	\right) \mathbf{1}_{\left\{I_{i}^{k}=1\right\}}\notag\\
	&\quad\quad+\frac{1}{2}
	\gamma_{i,k}^{2}\|\hat{g}_{i,k}\|^{2}\mathbf{1}_{\left\{I_{i}^{k}=1\right\}}
	,\quad {\forall a_{i}^{'}\in \mathcal{A}_{i},}
	\end{align}
	where $(a)$ comes from the nonexpansibility of the
	projection.
	
	Let
	$G_{i}^{2}=d_{i}^{2}\max_{a\in\mathcal{A}}|u_{i}(a_{i},a_{-i})|^{2}$, and we can 
	know that $G_i$ is a
	bounded constant  {because the set $\mathcal{A}$ is  compact  and 
		the 
		function $u_{i}$ is  continuous (Assumption
		\ref{ass-lip}).}
	Thus, with Line 8 of Algorithm \ref{alg1}, we obtain $\|\hat{g}_{i,k}\|^2 \leq
	G_{i}^{2}/\delta_k^2,$ and inequality (\ref{distance}) becomes  	
	\begin{align}\label{31}
	D_{i,k+1}&\leq D_{i,k}
	+(\gamma_{i,k}\left\langle g_{i}(a_{k}),a_{i,k}-a_i^{\prime}
	\right\rangle)\mathbf{1}_{\left\{I_{i}^{k}=1\right\}}\notag\\
	&+(\gamma_{i,k}\left\langle \rho_{i,k+1}, 
	a_{i,k}-a_i^{\prime}\right\rangle
	)\mathbf{1}_{\left\{I_{i}^{k}=1\right\}}\notag\\
	&+(\gamma_{i,k}\left\langle \zeta_{i,k},a_{i,k}-a_i^{\prime} \right\rangle
	)\mathbf{1}_{\left\{I_{i}^{k}=1\right\}}\notag\\
	&+\frac{1}{2}\gamma_{i,k}^{2}\frac{G_{i}^{2}}{\delta_{k}^{2}}
	\mathbf{1}_{\left\{I_{i}^{k}=1\right\}}.
	\end{align}
	Expressing 	
	$\gamma_{i,k}=\gamma_{k}p_{i}^{-w}+\tilde{\gamma}_{i,k}$ with
	$\tilde{\gamma}_{i,k}=\gamma_{i,k}-\gamma_{k}p_{i}^{-w}$,
	we
	have  	
	\begin{align}\label{bd-bik}
	&D_{i,k+1}\leq D_{i,k}
	+\left[ \tilde{\gamma}_{i,k}\left\langle
	g_{i}(a_{k}),a_{i,k}-a_i^{\prime}\right\rangle
	\right] \mathbf{1}_{\left\{I_{i}^{k}=1\right\}}\notag\\
	&+\left[ \tilde{\gamma}_{i,k}\left\langle
	\rho_{i,k+1},a_{i,k}-a_i^{\prime}\right\rangle
	\right] \mathbf{1}_{\left\{I_{i}^{k}=1\right\}}
	+\left[ \tilde{\gamma}_{i,k}\left\langle
	\zeta_{i,k},a_{i,k}-a_i^{\prime}\right\rangle
	\right] \mathbf{1}_{\left\{I_{i}^{k}=1\right\}}\notag\\
	&+\frac{\gamma_{k}}{p_{i}^{w}}\left\langle
	g_{i}(a_{k}),a_{i,k}-a_i^{\prime}\right\rangle
	\mathbf{1}_{\left\{I_{i}^{k}=1\right\}}+\frac{\gamma_{k}}{p_{i}^{w}}\left\langle 
	\rho_{i,k+1},a_{i,k}-a_i^{\prime}
	\right\rangle
	\mathbf{1}_{\left\{I_{i}^{k}=1\right\}}\notag\\
	&+\frac{\gamma_{k}}{p_{i}^{w}}\left\langle \zeta_{i,k},a_{i,k}-a_i^{\prime}
	\right\rangle
	\mathbf{1}_{\left\{I_{i}^{k}=1\right\}}
	+\frac{1}{2}\gamma_{i,k}^{2}\frac{G_{i}^{2}}{\delta_{k}^{2}}
	\mathbf{1}_{\left\{I_{i}^{k}=1\right\}} \notag\\
	& \leq D_{i,k}
	+\left[ \left| \tilde{\gamma}_{i,k}\right| \left| \left\langle
	g_{i}(a_{k}),a_{i,k}-a_i^{\prime}\right\rangle \right| \right]
	\mathbf{1}_{\left\{I_{i}^{k}=1\right\}}\notag\\
	&+\left[ \left( \tilde{\gamma}_{i,k}\right) \left\langle
	\rho_{i,k+1},a_{i,k}-a_i^{\prime}\right\rangle
	\right] \mathbf{1}_{\left\{I_{i}^{k}=1\right\}}\notag\\
	&+\left[ \left| \tilde{\gamma}_{i,k}\right| \left| \left\langle
	\zeta_{i,k},a_{i,k}-a_i^{\prime}\right\rangle
	\right|\right]  \mathbf{1}_{\left\{I_{i}^{k}=1\right\}}\notag\\
	&+\frac{\gamma_{k}}{p_{i}^{w}}\left\langle
	g_{i}(a_{k}),a_{i,k}-a_i^{\prime}\right\rangle
	\mathbf{1}_{\left\{I_{i}^{k}=1\right\}}+\frac{\gamma_{k}}{p_{i}^{w}}\left\langle 
	\rho_{i,k+1},a_{i,k}-a_i^{\prime}
	\right\rangle
	\mathbf{1}_{\left\{I_{i}^{k}=1\right\}}\notag\\
	&+\frac{\gamma_{k}}{p_{i}^{w}}\left| \left\langle
	\zeta_{i,k},a_{i,k}-a_i^{\prime}
	\right\rangle \right|
	\mathbf{1}_{\left\{I_{i}^{k}=1\right\}}
	+\frac{1}{2}\gamma_{i,k}^{2}\frac{G_{i}^{2}}{\delta_{k}^{2}}
	\mathbf{1}_{\left\{I_{i}^{k}=1\right\}}.
	\end{align}

	Note that $B_i
	=\max_{a_{i}^{\prime},a_{i}\in\mathcal{A}_{i}}\|a_{i}^{\prime}-a_{i}\|<\infty$
	by  compactness of $\mathcal{A}_{i}$. Then by  using \eqref{bik}, we obtain that
	\begin{align}
	\left| \left\langle \zeta_{i,k}, a_{i,k}-a_i^{\prime} \right\rangle\right|
	\leq
	\|\zeta_{i,k}\|\|a_{i,k}-a_i^{\prime}\|\leq B_{i}L_{i}\sqrt{N}\delta_{k},
	\end{align}
	where the first inequality comes from the Cauchy--Schwarz inequality.	Since 
	$\mathcal{A}_{i}$ is a compact convex set  and $g_{i}(a )$ is
	$L_i$-Lipschitz continuous (Assumption \ref{ass-lip}) for all $i$, we  have
	$\left|
	\left\langle
	g_{i}(a_{k}),a_{i,k}-a_i^{\prime}\right\rangle \right|
	\leq {C_i}$ for {any $k\geq 1$}.
	By noting that $a_{i,k}$, $g_{i}(a_{k})$ and $\zeta_{i,k}$ are finite-valued
	$\mathcal{F}_{k}-$measurable random variables,  $\rho_{i,k+1}\in \left\lbrace
	\mathcal{F}_{k},
	\lambda_{i,k} \right\rbrace $.
	Taking conditional expectations on
	$\mathcal{F}_{k}$ on both sides of the inequality \eqref{bd-bik}, {and using
		$\mathbb{E}\left[I_{i}^{k}
		\right]=p_{i}>0 $ for all $i$ and $k$,} we have
	\begin{align}\label{30}
	&\mathbb{E}[D_{i,k+1}\mid \mathcal{F}_{k}]\notag\\
	&\leq D_{i,k} + 
	({C_i}+B_{i}L_{i}\sqrt{N}\delta_{k})\mathbb{E}\left[
	\left|\tilde{\gamma}_{i,k} \right|
	\mathbf{1}_{I_{i}^{k}=1}\mid \mathcal{F}_{k}\right]  \notag\\
	&+\left\langle \mathbb{E}\left[\rho_{i,k+1}\mid \mathcal{F}_{k} \right],
	a_{i,k}-a_i^{\prime} \right\rangle
	\mathbb{E}\left[ \tilde{\gamma}_{i,k}
	\mathbf{1}_{I_{i}^{k}=1}\mid \mathcal{F}_{k}  \right] \notag\\
	&+\frac{\gamma_{k}}{p_{i}^{w-1}}\left\langle g_{i}(a_{k}),
	a_{i,k}-a_i^{\prime}
	\right\rangle  +\frac{\gamma_{k}}{p_{i}^{w-1}}\left\langle 
	\mathbb{E}\left[\rho_{i,k+1}\mid
	\mathcal{F}_{k} \right], a_{i,k}-a_i^{\prime}  \right\rangle \notag\\
	&+\frac{\gamma_{k}}{p_{i}^{w-1}} B_{i}L_{i}\sqrt{N}\delta_{k}
	+\frac{1}{2}\frac{G_{i}^{2}}{\delta_{k}^{2}}
	\mathbb{E}\left[\gamma_{i,k}^{2}\mathbf{1}_{I_{i}^{k}=1}\mid
	\mathcal{F}_{k} \right],
	\end{align}
	where the inequality comes from the independence of random variables 
	$\lambda_{i,k}$
	and $I_{i,k}$ (Assumption 2) with respect to $\mathcal{F}_{k}$.
	
	By the definition of 
	$\rho_{i,k+1}=\hat{g}_{i,k}-\mathbb{E}\left[\hat{g}_{i,k} 
	\mid
	\mathcal{F}_{k} \right] $, 
	we have $\left\langle
	\mathbb{E}\left[\rho_{i,k+1} \mid
	\mathcal{F}_{k} \right], a_{i,k}-a_i^{\prime}  \right\rangle=0 $. We further
	amplify (\ref{30})  by
	removing the indicator function to achieve
	%
	\begin{align}\label{d}
	&\mathbb{E}\left[D_{i,k+1}\mid \mathcal{F}_{k} \right] \notag\\
	&\leq
	D_{i,k}+\frac{1}{2}\frac{G_{i}^{2}}{\delta_{k}^{2}}
	\mathbb{E}\left[\gamma_{i,k}^{2}\mid\mathcal{F}_{k} \right] 
	+\frac{\gamma_{k}}{p_{i}^{w-1}}\left\langle g_{i}(a_{k}),
	a_{i,k}-a_i^{\prime}
	\right\rangle
	\notag\\ 
	&
	+\frac{\gamma_{k}}{p_{i}^{w-1}}B_{i}L_{i}\sqrt{N}\delta_{k}
	+(C_i+B_{i}L_{i}\sqrt{N}\delta_{k})\mathbb{E}
	\left[\left|\tilde{\gamma}_{i,k} \right| \mid \mathcal{F}_{k}
	\right]. 	
	\end{align}
	From (\ref{d}) {it follows that}
	\begin{align} \label{49}
	p_{i}^{w-1}&\mathbb{E}\left[D_{i,k+1} \mid \mathcal{F}_{k} \right]
	\notag\leq
	p_{i}^{w-1}D_{i,k}+\frac{p_{i}^{w-1}}{2}\frac{G_{i}^{2}}{\delta_{k}^{2}}
	\mathbb{E}\left[\gamma_{i,k}^{2}\mid\mathcal{F}_{k}\right]
	\\ &+({C_i}+B_{i}L_{i}\sqrt{N}\delta_{k})p_{i}^{w-1}
	\mathbb{E}\left[\left| \tilde{\gamma}_{i,k} \right|\mid\mathcal{F}_{k}
	\right] \notag\\
	&+\gamma_{k}\left\langle g_{i}(a_{k}) ,
	a_{i,k}-a_i^{\prime}\right\rangle
	+B_{i}L_{i}\sqrt{N}\gamma_{k}\delta_{k}.
	\end{align}
	
	{With the definition
		$D_{k}=\sum_{i \in \mathcal{N}}p_{i}^{w-1}D_{i,k}$, by}  summing up
	(\ref{49})
	from
	$i=1$ to $N$, we obtain
	\begin{align}\label{40}
	\mathbb{E}\left[D_{k+1}\mid \mathcal{F}_{k} \right] 
	&\leq
	D_{k}+\sum_{i \in
		\mathcal{N}}({C_i}+B_{i}L_{i}\sqrt{N}\delta_{k})p_{i}^{w-1}
	\mathbb{E}\left[
	\left|\tilde{\gamma}_{i,k}
	\right|\mid
	\mathcal{F}_{k} \right]  \notag\\
	&+\gamma_{k}\sum_{i \in \mathcal{N}}\left\langle
	g_{i}(a_{k}),
	a_{i,k}{-a_{i}^{'}} \right\rangle
	+\sum_{i \in
		\mathcal{N}}B_{i}L_{i}\sqrt{N}\gamma_{k}\delta_{k} \notag\\
	&
	+\sum_{i \in
		\mathcal{N}}\frac{p_{i}^{w-1}}{2}\frac{G_{i}^{2}}{\delta_{k}^{2}}
	\mathbb{E}\left[\gamma_{i,k}^{2}\mid \mathcal{F}_{k} \right].
	\end{align}
	
	{By Assumption \ref{defn-monotone}, we have
		\begin{align*}
		\sum_{i \in \mathcal{N}}\left\langle g_{i}(a_k)-g_{i}(a^{\prime}),
		a_{i,k}-a_{i}^{\prime} \right\rangle <0, \quad {\forall a^{'}\in
			\mathcal{A},}
		\end{align*}
		which implies
		\begin{align*}
		\sum_{i \in \mathcal{N}}&\left\langle g_{i}(a_k),
		a_{i,k}-a_{i}^{\prime}
		\right\rangle  \leq \sum_{i \in \mathcal{N}}\left\langle
		g_{i}(a^{\prime}),
		a_{i,k}-a_{i}^{\prime}  \right\rangle
		.
		\end{align*}
		This incorporating with (\ref{40}) proves the lemma.}
\end{proof}

\subsection{Proof of Theorem \ref{regret} }\label{the1}

\begin{proof}[\textbf{Proof}]
	By rearranging the terms of (\ref{d}), we have
	\begin{align}
	\left\langle g_{i}(a_{k}), a_i^{\prime}-a_{i,k} \right\rangle
	&\leq \frac{p_{i}^{w-1}}{\gamma_{k}}\left[D_{i,k}-\mathbb{E}\left[ D_{i,k+1}
	\mid
	\mathcal{F}_{k}\right]  \right] \notag\\
	&
	+\frac{p_{i}^{w-1}}{\gamma_{k}}({C_i}+B_{i}L_{i}\sqrt{N}\delta_{k})
	\mathbb{E}\left[\left|\tilde{\gamma}_{i,k} \right|\mid
	\mathcal{F}_{k}  \right] \notag\\
	& +B_{i}L_{i}\sqrt{N}\delta_{k}
	+\frac{p_{i}^{w-1}}{2\gamma_{k}}\frac{G_{i}^{2}}{\delta_{k}^{2}}
	\mathbb{E}\left[\gamma_{i,k}^{2} \mid \mathcal{F}_{k}\right].
	\end{align}
	{Then by taking} unconditional expectations on both sides of the 
	{above}
	inequality,
	we obtain
	\begin{align}\label{bd-bik2}
	\mathbb{E}\left[\left\langle g_{i}(a_{k}), a_i^{\prime}-a_{i,k}
	\right\rangle
	\right]
	&\leq \frac{p_{i}^{w-1}}{\gamma_{k}}
	(\mathbb{E}\left[D_{i,k} \right]-\mathbb{E}\left[D_{i,k+1} \right]  )\notag\\
	&
	+\frac{p_{i}^{w-1}}{\gamma_{k}}({C_i}+B_{i}L_{i}\sqrt{N}\delta_{k})
	\mathbb{E}\left[\left|\tilde{\gamma}_{i,k} \right|
	\right]\notag\\
	&
	+B_{i}L_{i}\sqrt{N}\delta_{k}+\frac{p_{i}^{w-1}}{2\gamma_{k}}
	\frac{G_{i}^{2}}{\delta_{k}^{2}}\mathbb{E}\left[\gamma_{i,k}^{2} \right],
	\end{align}
	where the inequality comes from the law of total expectation.
	{With the $\gamma_{i,k}=\gamma_k p_i^{-w}$ and
		$\tilde{\gamma}_{i,k}=\gamma_{i,k}-\gamma_k p_i^{-w}$, we have that
		$\tilde{\gamma}_{i,k}=0$. Then by recalling the definition
		$D_{i,k}=\frac{1}{2}\|a_{i,k}-a_i^{\prime}\|^{2}$ and summing up
		\eqref{bd-bik2}} from $k=1$ to $K$, we obtain
	\begin{align}\label{regret-all}
	&\sum_{k=1}^{K}\mathbb{E}\left[\left\langle g_{i}(a_{k}),
	a_i^{\prime}-a_{i,k}
	\right\rangle  \right] \notag\\
	&\quad\quad \leq \underbrace{ \sum_{k=1}^{K}\frac{p_{i}^{w-1}}{2\gamma_{k}}
		\mathbb{E}\left[\|a_{i,k}-a_i^{\prime}\|^{2}-\|a_{i,k+1}-a_i^{\prime}\|^{2}\right]
	}_{Term 1}
	\notag\\
	%
	&\quad\quad +\sum_{k=1}^{K}B_{i}L_{i}\sqrt{N}\delta_{k}
	+\sum_{k=1}^{K}\frac{p_{i}^{w-1}}{2\gamma_{k}}\frac{G_{i}^{2}}{\delta_{k}^{2}}
	\mathbb{E}\left[\gamma_{i,k}^{2} \right].
	\end{align}
	Note that
	\begin{align}\label{regret-term1}
	Term 1
	&= \frac{p_{i}^{w-1}}{2\gamma_{1}}\mathbb{E}\left[\|a_{i,1}-a_i^{\prime}
	\|^{2}  \right]  -
	\frac{p_{i}^{w-1}}{2\gamma_{K}}\mathbb{E}
	\left[\|a_{i,K+1}-a_i^{\prime}\| ^{2}
	\right]  \notag\\
	&
	+\frac{p_{i}^{w-1}}{2}\sum_{k=2}^{K}\left(
	\frac{1}{\gamma_{k}}-\frac{1}{\gamma_{k-1}}\right)
	\mathbb{E}\left[\|a_{i,k}-a_i^{\prime} \|^{2}  \right] \notag\\
	&\leq \frac{p_{i}^{w-1}B_{i}^{2}}{2\gamma_{1}}+\frac{p_{i}^{w-1}}{2}
	\sum_{k=2}^{K}\left(\frac{1}{\gamma_{k}}-\frac{1}{\gamma_{k-1}}
	\right)B_{i}^{2}
	\notag\\
	& \leq \frac{p_{i}^{w-1}B_{i}^{2}}{2\gamma_{K}},
	\end{align}
	where the second inequality comes from the fact that $B_i
	=\max_{ a_{i}^{\prime},a_{i}\in\mathcal{A}_{i}}\| a_{i}^{\prime}-a_{i}\|$ and
	{that $\{\gamma_k\}$ is non-increasing.}
	{Then by \eqref{regret-all}, we have
		\begin{align}\label{regret-all2}
		\sum_{k=1}^{K}\mathbb{E}&\left[\left\langle g_{i}(a_{k}),
		a_i^{\prime}-a_{i,k}
		\right\rangle  \right]
		\leq  \frac{p_{i}^{w-1}B_{i}^{2}}{2\gamma_{K}}
		\notag\\
		&\quad
		+\sum_{k=1}^{K}B_{i}L_{i}\sqrt{N}\delta_{k}
		+\sum_{k=1}^{K}\frac{p_{i}^{w-1}}{2\gamma_{k}}\frac{G_{i}^{2}}{\delta_{k}^{2}}
		\mathbb{E}\left[\gamma_{i,k}^{2} \right].
		\end{align} }

	Since  ${u}_{i}(\bullet ,a_{-i,k})$ is concave in
	$a_{i}\in\mathcal{A}_{i}$, {with} the definition of
	$g_{i}(a_{k})= \nabla_{a_{i}}u_{i}\left(a_{i,k} , a_{-i,k}\right)$, we have
	\begin{align}\label{first-order}
	{u}_{i}(a_i^{\prime},a_{-i,k})-{u}_{i}(a_{i,k},a_{-i,k})\leq
	g_{i}(a_{k})^\top(a_i^{\prime}-a_{i,k}).
	\end{align}
	Therefore, we have
	\begin{align}\label{ui-vi}
	\mathbb{E}\left[\mathcal{R}eg^{(i)} (K)\right]
	&=\mathbb{E}\left[
	\max_{a_i^{\prime}\in\mathcal{A}}\sum_{k=1}^{K}\left\lbrace
	u_{i}(a_i^{\prime},a_{-i,k})-u_{i}(a_{i,k},a_{-i,k}) \right\rbrace \right]
	\notag\\
	&\leq \mathbb{E}\left[
	\max_{a_i^{\prime}\in\mathcal{A}}\sum_{k=1}^{K}g_{i}(a_{k})^{T}(a_i^{\prime}-a_{i,k})
	\right].
	\end{align}
	{Therefore, by combining inequality (\ref{regret-all2}) and
		(\ref{ui-vi}), and using the Jensen's inequality to interchange the max
		and $\mathbb{E}$ operations, we  prove  Theorem \ref{regret}.}
\end{proof}

\subsection{Proof of Corollary \ref{no-regret}}\label{cor1}

\begin{proof}[\textbf{Proof}]
		By substituting  $\gamma_{i,k}=\gamma_{k}p_{i}^{-w}$ with
		$\gamma_{k}=k^{-b}$, $b>0 $ and $ w=1$  into  Theorem \ref{regret}, we have
		\begin{align}\label{35}
		\mathbb{E}\left[\mathcal{R}eg^{(i)} (K)\right]
		&  \leq
		\frac{B_{i}^{2}}{2}k^{b} +
		\sum_{k=1}^{K}B_{i}L_{i}\sqrt{N}\delta_{k}\notag\\ 
		&+\sum_{k=1}^{K}\frac{G_{i}^{2}}{2p_{i}^{2}}\delta_{k}^{-2}k^{-b}.
		\end{align}
	
	By noting that  $\delta_{k}={\delta_{1}}k^{-c}$ with a constant
	$c>0$, we have
	\begin{align}\label{k-c}
	\sum_{k=1}^{K}\delta_{k}=\sum_{k=1}^{K}\frac{{\delta_{1}}}{k^{c}}
	&\leq
	{\delta_{1}}\left( 1+\int_{k=1}^{K}\frac{1}{k^{c}}d_{k}\right)
	\notag\\
	&\leq
	{\delta_{1}}+\frac{{\delta_{1}}}{1-c}K^{1-c}.
	\end{align}
	In the same way, we have
	\begin{align}\label{delta-k}
	\sum_{k=1}^{K}\delta_{k}^{-2}k^{-b}
	=\sum_{k=1}^{K}\frac{k^{2c-b}}{{\delta_{1}^{2}}}
	\leq \frac{1}{{\delta_{1}^{2}}}
	+\frac{K^{1-b+2c}}{(1-b+2c){\delta_{1}^{2}}}.
	\end{align}
	Then, substituting (\ref{k-c}) and (\ref{delta-k}) into (\ref{35}), we obtain
	\begin{align}
	\mathbb{E}&\left[\mathcal{R}eg^{(i)} (K)\right]
	\leq \frac{B_{i}^{2}}{2}k^{b}
	+ \frac{B_{i}L_{i}\sqrt{N}{\delta_{1}}}{1-c}K^{1-c}
	\notag\\
	& +\frac{G_{i}^{2}}{2p_{i}^{2}{\delta_{1}^{2}}(1-b+2c)}K^{1-b+2c}
	+B_{i}L_{i}\sqrt{N}{\delta_{1}}
	+\frac{G_{i}^{2}}{2p_{i}^{2}{\delta_{1}^{2}}}. \notag
	\end{align}
	
	Thus, $\limsup_{k\rightarrow \infty}\mathcal{R}eg^{(i)}(K)/K=0$
	follows from $0<b<1$ and $0<c<b/2$.

\end{proof}

\subsection{Proof of Theorem \ref{the-con}}\label{theo2}

\begin{proof}[\textbf{Proof}]  Recall that  $\gamma_{i,k}=\gamma_{k}p_{i}^{-w}$  and
		$\tilde{\gamma}_{i,k}=\gamma_{i,k}-\gamma_{k}p_{i}^{-w}=0$  for $w=1$.
	Then,   from Lemma \ref{lemma-convergence} {and let
		$a_{i}^{'}=a_{i}^{*}$}, we
	obtain 	\begin{align}
	\mathbb{E}\left[D_{k+1}\mid \mathcal{F}_{k} \right]
	&\leq D_{k} + \gamma_{k}\left\langle
	g (a^{*}), a_{k}-a^{*} \right\rangle \notag\\
	&+\underbrace{\sum_{i \in
			\mathcal{N}}B_{i}L_{i}\sqrt{N}\gamma_{k}\delta_{k}}_{Term 1}
	+\underbrace{\sum_{i
			\in
			\mathcal{N}}\frac{G_{i}^{2}}{2p_{i}^{2}}\frac{\gamma_{k}^{2}}{\delta_{k}^{2}}
	}_{Term 2} \notag.
	\end{align}

	From Assumption \ref{ass-stepsize},  we  obtain
	\begin{align}
	\sum_{k=1}^{\infty}(Term 1 + Term 2 )<\infty.
	\end{align}
	By {recalling \eqref{vi}  and applying  the Robbins's convergence theorem,}
	we conclude  that $D_{k}$ converges  to some finite random variable $D_{\infty}$ 
	almost
	surely and
	{\[\sum_{k=1}^{\infty} \gamma_{k}\left\langle
		g(a^{*}), a^{*}-a_{k}\right\rangle<\infty.\]}
	The requirement $\sum_{k=1}^{\infty}\gamma_{k}=
	\infty$ in Assumption \ref{ass-stepsize}
	implies that {$\lim\inf_{k\rightarrow\infty}\left\langle g(a^{*}), 	 a^{*}
		-a_k  \right\rangle =0$.}
	So,  there exists a subsequence
	$\{k_{r}\}$ such that
	$\lim_{r\rightarrow\infty} \left\langle v (a^{*}), 	a^{*}  -a_{k_r}
	\right\rangle=0$.
	Let $\tilde{a} $ be a limit point of  $\{a_{ k_{r}}\}$.
	Then, $ \left\langle g(a^{*}), a^{*}-  \tilde{a}\right\rangle=0 $. Hence
	$\tilde{a}=a^{*}$ by the strict monotonicity of $g(a)$ (Assumption
	\ref{defn-monotone}).  Then
	$D(a_{k_{r}}, a^{*})$ converges a.s. to zero. By recalling that
	$D(a_{k},a^{*})$   converges a.s.,
	we {reach the conclusion} that  
	$D(a_{k},a^{*})\xrightarrow[k\rightarrow\infty]{a.s.}0$.
	Hence,
	$a_{k}\xrightarrow[k\rightarrow\infty]{a.s.}a^{*}$.
\end{proof}

\subsection{Proof of Theorem \ref{the2}}\label{theo3}

In this part,  we give the analysis of the convergence rate of Algorithm 1 for
the strongly monotone game.
To begin with, we introduce a lemma from   \cite[Lemma
3]{chung1954stochastic}.
\begin{lemma}\label{an}
	Let $\{x_k\}$  be a non-negative sequence such that
	\begin{align}\label{anpq}
	x_{k+1}\leq x_{k}\left(1-\frac{P}{k^{p}} \right)+\frac{Q}{k^{p+q}},
	\end{align}
	where $0<p\leq 1$, $q>0$, and $P,Q>0$. Then assuming $P>q$ if $p=1$, we have
	\begin{align}\label{anpq2}
	x_{k}\leq\frac{Q}{R}\frac{1}{k^{q}}+o\left( \frac{1}{k^{q}}\right)
	\end{align}
	with $R=P$ if $p<1$ and $R=P-q$ if $p=1$.
\end{lemma}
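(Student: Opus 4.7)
The plan is to transform the recursion via $u_k = k^q x_k$ and prove that $\limsup_{k\to\infty} u_k \leq Q/R$, which is exactly equivalent to the stated bound $x_k \leq (Q/R) k^{-q} + o(k^{-q})$. Using $(k+1)^q/k^q = 1 + q/k + O(1/k^2)$ in the given inequality \eqref{anpq}, I would derive a transformed recursion
\[
u_{k+1} \leq u_k\Bigl(1 - \frac{R}{k^p} + \frac{\varepsilon_k}{k^p}\Bigr) + \frac{Q}{k^p} + \frac{\eta_k}{k^p},
\]
where $\varepsilon_k, \eta_k \to 0$. The value of $R$ emerges naturally from matching orders: when $p<1$, the cross term $q/k$ is asymptotically negligible next to $P/k^p$, yielding $R = P$; when $p = 1$, the expansion produces $u_k(1 + q/k)(1 - P/k)$, whose linear-in-$1/k$ contribution is $-(P-q)/k$, giving $R = P - q$. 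The hypothesis $P > q$ in the case $p=1$ is precisely what guarantees $R > 0$.

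Before taking limits I first need to show $\{u_k\}$ is bounded, because the error $u_k \varepsilon_k/k^p$ is not \emph{a priori} of order $o(1/k^p)$. For this I would iterate a rougher version $u_{k+1} \leq u_k(1 - R'/k^p) + Q'/k^p$ (valid for $k \geq k_0$ with $R' = R/2$, say) and apply the telescoping identity
\[
\sum_{j=k_0}^{k} \frac{R'}{j^p}\prod_{i=j+1}^{k}\Bigl(1 - \frac{R'}{i^p}\Bigr) = 1 - \prod_{i=k_0}^{k}\Bigl(1 - \frac{R'}{i^p}\Bigr) \leq 1,
\]
which yields $u_k \leq u_{k_0} + Q'/R'$, hence bounded by some constant $M$.

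Knowing $u_k \leq M$, the error term $M \varepsilon_k / k^p$ can be absorbed into a single $\hat\eta_k/k^p$ with $\hat\eta_k \to 0$, and the recursion becomes $u_{k+1} \leq u_k(1 - R/k^p) + (Q + \hat\eta_k)/k^p$. Iterating once more and applying the same telescoping identity to the leading coefficient produces
\[
u_{k+1} \leq u_{k_0}\prod_{i=k_0}^{k}\Bigl(1 - \frac{R}{i^p}\Bigr) + \frac{1}{R}\sum_{j=k_0}^{k}(Q + \hat\eta_j)\cdot\frac{R}{j^p}\prod_{i=j+1}^{k}\Bigl(1 - \frac{R}{i^p}\Bigr).
\]
The first piece vanishes as $k \to \infty$ because $\sum 1/i^p$ diverges (for $p \leq 1$), so the product tends to $0$. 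The main sum splits into the $Q$-part, which is bounded by $Q/R$, plus an error. To handle the error, given $\epsilon > 0$, pick $N_\epsilon$ with $|\hat\eta_j| < \epsilon$ for $j \geq N_\epsilon$; the tail contributes at most $\epsilon/R$ while the head vanishes as $k \to \infty$ since $\prod_{i=j+1}^{k}(1 - R/i^p) \to 0$ for fixed $j$. Sending $\epsilon \to 0$ yields $\limsup u_k \leq Q/R$.

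The main obstacle will be the case $p = 1$, where the correction from $(k+1)^q/k^q$ interacts with the leading term $P/k$ at the same order and must be tracked carefully to confirm that $R = P - q$ and not something else; in the case $p < 1$ the argument is cleaner because all cross terms are subleading. A secondary technicality is ensuring uniform smallness of $\varepsilon_k$ and $\eta_k$ so that the bounded-$u_k$ step is not circular — this is handled by choosing the threshold $k_0$ large enough that $R - \varepsilon_k \geq R/2$ for all $k \geq k_0$ before invoking the telescoping identity.
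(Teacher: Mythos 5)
Your proposal is correct, but note that the paper does not prove this statement at all: it is imported verbatim as \cite[Lemma 3]{chung1954stochastic} (Chung, 1954), so there is no in-paper proof to compare against. What you have written is essentially the standard modern proof of Chung's lemma, and it holds together. The substitution $u_k=k^q x_k$ with the expansion $(k+1)^q/k^q=1+q/k+O(1/k^2)$ correctly produces the effective contraction rate: for $p<1$ the $q/k$ term is $o(1/k^p)$ so $R=P$, while for $p=1$ it combines with $-P/k$ at the same order to give $R=P-q$, which is where the hypothesis $P>q$ is consumed. Your two-pass structure is also the right way to avoid circularity: the perturbation $\varepsilon_k$ in the coefficient is purely deterministic, so the rough recursion $u_{k+1}\le u_k(1-R'/k^p)+Q'/k^p$ for $k\ge k_0$ together with the telescoping identity $\sum_{j=k_0}^{k}c_j\prod_{i=j+1}^{k}(1-c_i)=1-\prod_{i=k_0}^{k}(1-c_i)\le 1$ legitimately yields $u_k\le u_{k_0}+Q'/R'$, after which the term $u_k\varepsilon_k/k^p$ can be absorbed into a vanishing additive error. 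The final limsup argument (product of the factors tends to zero because $\sum i^{-p}$ diverges for $p\le 1$; the $Q$-part of the Duhamel sum is bounded by $Q/R$; the $\hat\eta$-part is split at a threshold $N_\epsilon$) correctly delivers $\limsup_k k^q x_k\le Q/R$, which is exactly the claimed $x_k\le (Q/R)k^{-q}+o(k^{-q})$. The only points worth making explicit in a written-up version are that $k_0$ must be chosen so that $1-R'/i^p\in[0,1]$ for $i\ge k_0$ (so the products are nonnegative and the telescoping bound applies), and that the absorption $u_k\varepsilon_k\le M|\varepsilon_k|$ uses $0\le u_k\le M$; both are minor and you have effectively flagged them.
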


\begin{proof}[\textbf{Proof of Theorem \ref{the2}}]
	{In the setting of Theorem \ref{the2}, we have $\gamma_{i,k}=\gamma_kp^{-w}$
		with $\gamma_k=k^{-1}, w=1$ and $\tilde{\gamma}_{i,k}=0.$ Let $D_{k}=\sum_{i
			\in 	\mathcal{N}}\frac{1}{2}\|a_{i,k}-a_{i}^{*}\|^{2}$}, since the
	game
	is $\beta$-strongly monotone (Assumption \ref{strongly}), by  (\ref{vi}) we
	have   	
	{\begin{align}\label{vxn}
		 \left\langle g \left(a_k \right), a_{k}-a^*\right\rangle 
		&=\left\langle g\left(a_k \right)-g(a^*),
		a_{k}-a^*\right\rangle  + \left\langle g\left(a^* \right),
		a_k-a^*\right\rangle  \notag\\
		&\leq  -\beta \|a_k-a^*\|^2 	=-2\beta D_{k}.
		\end{align}
		{Then let $a_{i}^{'}$ in (\ref{40}) be replaced by $a_{i}^{*}$,}
		and  by  substituting (\ref{vxn}) into
		(\ref{40}) and taking unconditional expectations,  we obtain
		\begin{align}\label{43}
		\mathbb{E}\left[D_{k+1} \right]
		&\leq   (1- 2\beta\gamma_{k}  )\mathbb{E}\left[D_{k} \right]
		+\sum_{i \in \mathcal{N}}B_{i}L_{i}\sqrt{N}\gamma_{k}\delta_{k}
		\notag\\
		&
		+\sum_{i \in \mathcal{N}}
		\frac{G_{i}^{2}}{2\delta_{k}^{2}}\mathbb{E}\left[\gamma_{i,k}^{2}
		\right].
		\end{align}}
	
	{Since $\gamma_{i,k}=1/(kp_i)$, $\gamma_{k}=1/k$, and
		$\delta_{k}={\delta_{1}}k^{-1/3}$ {with $\delta_{1}<\min_{i}
			r_{i}$}, we  obtain from \eqref{43} that}
	\begin{align}\label{36}
	\mathbb{E}\left[D_{k+1} \right]\leq \left(
	1-\frac{2\beta}{k}\right)\mathbb{E}\left[D_{k} \right]
	+\frac{H_{1}+H_{2}}{k^{\frac{4}{3}}},
	\end{align}
	where {constants} $H_{1}=\sum_{i \in
		\mathcal{N}}\sqrt{N}B_{i}L_{i}{\delta_{1}}$ and
	$H_{2}=\sum_{i \in
		\mathcal{N}}{G_{i}^{2}}/{(2p_{i}^{2}{\delta_{1}^{2}})}$.
	{Then we discuss the constant $\beta$ in the following two cases.}
	
	{\textbf{Case 1:}}
	{		
		When $\beta\geq 1/6$.   By Lemma \ref{an}, 
		\begin{align}\label{bd-rate1}
		\mathbb{E}\left[D_{k} \right]
		&\leq \frac{\sum_{i \in
				\mathcal{N}}\left( \sqrt{N}B_{i}L_{i}{\delta_{1}}
			+\frac{G_{i}^{2}}{2p_{i}^{2}{\delta_{1}^{2}}}\right)
		}{2\beta-\frac{1}{3}}
		\frac{1}{k^{1/3}}
		+o\left(
		\frac{1}{k^{1/3}}\right).
		\end{align}}
	
	{\textbf{Case 2:}}
	When $0<\beta< 1/6$. We  rewrite (\ref{36}) in the following form: 		
	\begin{align}\label{39}
	\mathbb{E}\left[D_{k+1} \right]
	&\leq \Pi_{t=1}^k \left(
	1-\frac{2\beta}{t}\right)\mathbb{E}\left[D_1 \right]
	\notag\\
	&+(H_{1}+H_{2})\left[  \sum_{s=1}^{{k-1}}  \Pi_{t= s+1}^{k} \left(
	1-\frac{2\beta}{t}\right) s^{-\frac{4}{3}} {+k^{-\frac{4}{3}}}\right]
	\notag\\
	&\leq \Pi_{t=1}^k  \exp\left(
	-\frac{2\beta}{t}\right)\mathbb{E}\left[D_1 \right]
	\notag\\
	&+(H_{1}+H_{2})\left[  \sum_{s=1}^{k-1}   \Pi_{t= s+1}^{k} \exp \left(
	-\frac{2\beta}{t}\right) s^{-\frac{4}{3}}  {+k^{-\frac{4}{3}}}\right]
	\notag\\
	&\leq \exp\left(
	-\sum_{t=1}^k\frac{2\beta}{t}\right)\mathbb{E}\left[D_1 \right]
	\notag\\
	&+(H_{1}+H_{2})\left[  \sum_{s=1}^{k-1}  \exp \left(
	-\sum_{t= s+1}^{k}\frac{2\beta}{t}\right) s^{-\frac{4}{3}}
	{	+k^{-\frac{4}{3}}}	
	\right], 	
	\end{align}
	where the  second inequality comes from the fact that $1-x \leq \exp(-x),
	x>0$.
	
	{Since by the integral test and the divergence rate of the harmonic
		series,
		we know
		\begin{align}\label{400}
		\sum_{t=1}^{k}\frac{1}{t} > \int_{t=1}^{k+1}\frac{1}{t}d_{t}
		=\ln(k+1)>\ln(k),
		\end{align}
		and
		\begin{align}
		\sum_{t=1}^{k}\frac{1}{t} \leq 1+\int_{t=1}^{k}\frac{1}{t}d_{t} =1+
		\ln(k).\notag
		\end{align}
		
		Furthermore, 
		\begin{align}\label{41}
		\sum_{t=s+1}^{k}\frac{1}{t}
		=\sum_{t=1}^{k}\frac{1}{t}-\sum_{t=1}^{s}\frac{1}{t}
		&\geq \ln(k+1)-\ln(s)-1	\notag\\
		&>\ln(\frac{k}{s})-1.		
		\end{align}}
	Then, substituting (\ref{400}) and (\ref{41}) into (\ref{39}), we have
	\begin{align}\label{4333}
	&\mathbb{E}\left[D_{k+1} \right]
	\leq \exp\left( -  2\beta \ln(k)  \right)\mathbb{E}\left[D_1 \right]
	\notag\\
	&\quad +(H_{1}+H_{2})\Big[  \underbrace{\sum_{s=1}^{{k-1}}  \exp \left(
		-2\beta
		(\ln(\frac{k}{s}){-1})\right)
		s^{-\frac{4}{3}}}_{Term 1}+ k^{-\frac{4}{3}} \Big].
	\end{align}
	For $Term 1$, we have
	\begin{align}
	Term 1
	&= {\exp(2\beta)}\sum_{s=1}^{k-1}
	(\frac{k}{s})^{ - 2\beta} s^{-\frac{4}{3}} \notag\\
	&={\exp(2\beta) }  k^{ -  2\beta}
	\sum_{s=1}^{k-1}  s^{2\beta-\frac{4}{3}} \notag\\
	&\leq {\exp(2\beta) }  k^{-2\beta}  {1\over 2\beta-\frac{1}{3}}
	s^{2\beta-\frac{1}{3}}\big|_{s=1}^{k-1} \notag\\
	&= {{\exp(2\beta) }  \over {\frac{1}{3}-2\beta}}k^{ -
		2\beta}
	\left[{1-(k-1)^{ 2\beta - \frac{1}{3}}}  \right] \notag\\
	&{\leq {\exp(2\beta) \over \frac{1}{3}-2\beta} k^{-2\beta}.} \notag
	\end{align}
	This together with (\ref{4333})	implies
	\begin{align}
	\mathbb{E}\left[D_{k+1} \right]
	&\leq k^{ -  2\beta}\mathbb{E}\left[D_1 \right]
	{+ (H_{1}+H_{2})k^{-\frac{4}{3}}}
	\notag\\
	&\quad+{(H_{1}+H_{2}){\exp(2\beta) }\over
		{\frac{1}{3}-	2\beta}}k^{-2\beta}
	\notag \\
	&=\mathcal{O}( k^{ -  2\beta} ).
	\end{align}
	
	By	combining \textbf{Case 1} with \textbf{Case 2}, we prove the theorem.
\end{proof}

\subsection{Proof of Theorem \ref{the-con-pi}}\label{theo4}

Recall that when the loss probability of bandit
feedback can be obtained, {the step-size
	$\gamma_{i,k}=1/(k^{b}p_{i}), a>0$ can be directly  substituted into Lemma  2 to
	yield $\tilde{\gamma}_{i,k}\equiv 0$}.
But when  $p_i$ is unknown,
step-size is a function related to the number of updates up to the current time. So
we provide  results about such a step-size  as  follows.


\begin{lemma}\cite[Lemma 5]{5719290}\label{lemma7}
	Let $\tilde{\gamma}_{i,k}=\gamma_{i,k}-{1}/{(k^{q}p_{i}^{q})}$,
	step-size $\gamma_{i,k}={1}/{(\Gamma_{i}^{k})^{q}}$ where
	$\Gamma_{i}^{k}=\sum_{t=1}^{k}I_{i}^{t}$, $q\in (1/2, 1]$, and
	$\mathbb{E}[I_{i}^{k}]=p_{i}>0$ for all $i$ and $k$. Then, for any
	$\sigma\in(0,1/2 )$ , and for every $\omega \in
	\Omega$, there exists  a
	sufficiently small constant
	$\epsilon>0$ and
	a sufficiently large
	$\tilde{k}(\omega)=\tilde{k}(\sigma,\epsilon)$ such that we have for all
	$k\geq\tilde{k}(\omega)$
	and $i\in \mathcal{N}$,
	\begin{align*}
	(a) \gamma_{i,k}\leq \frac{2^{q}}{k^{q}p_{i}^{q}};   \quad\quad
	(b) \left|
	\tilde{\gamma}_{i,k} \right|
	\leq \frac{2q\epsilon}{p_{i}^{2}k^{\frac{1}{2}+q-\sigma}}.
	\end{align*}
\end{lemma}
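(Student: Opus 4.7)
The plan is to lift the Robbins-Siegmund supermartingale argument from the proof of Theorem \ref{the-con} to the data-dependent, random step-size $\gamma_{i,k}=(\Gamma_i^k)^{-q}$, treating $(kp_i)^{-q}$ as the ``nominal'' deterministic rate and controlling the deviation via Lemma \ref{lemma7}. With $w=q$ and $\gamma_k=k^{-q}$ in Lemma \ref{lemma-convergence}, the decomposition $\gamma_{i,k}=\gamma_k p_i^{-q}+\tilde{\gamma}_{i,k}$ matches Lemma \ref{lemma7}'s setup exactly.

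Applying Lemma \ref{lemma-convergence} with $a'=a^*$ and using the VI characterization \eqref{vi}, the drift $\gamma_k \langle g(a^*), a_k - a^*\rangle$ is nonpositive and provides the ``$-C_k$'' contribution required by Robbins-Siegmund. I would then verify that the three perturbation terms are summable almost surely. The bias contribution $\sum_k \gamma_k \delta_k = \sum_k k^{-q}\delta_k$ is finite by hypothesis \eqref{ass-delta}. Lemma \ref{lemma7}(a) yields the pathwise bound $\gamma_{i,k}^2 \leq 4^q (kp_i)^{-2q}$ for $k \geq \tilde{k}(\omega)$, so the noise term $\sum_k \mathbb{E}[\gamma_{i,k}^2|\mathcal{F}_k]/\delta_k^2$ is controlled by the hypothesis $\sum_k k^{-2q}\delta_k^{-2} < \infty$. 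For the step-size deviation, Lemma \ref{lemma7}(b) gives $|\tilde{\gamma}_{i,k}| \leq 2q\epsilon/(p_i^2 k^{1/2+q-\sigma})$; choosing $\sigma \in (0, q-1/2)$ (feasible since $q>1/2$) forces the exponent to strictly exceed $1$, so $\sum_k \mathbb{E}[|\tilde{\gamma}_{i,k}||\mathcal{F}_k]$ and the $\delta_k$-coupled variant are both finite.

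With all perturbation sums finite a.s., Robbins-Siegmund yields (i) $D_k \to D_\infty$ a.s.\ for a finite random variable $D_\infty$ and (ii) $\sum_k \gamma_k \langle g(a^*), a^*-a_k\rangle < \infty$ a.s. Since $\sum_k k^{-q}=\infty$ and the summands are nonnegative, a subsequence $\{a_{k_r}\}$ satisfies $\langle g(a^*), a^*-a_{k_r}\rangle \to 0$; compactness of $\mathcal{A}$ and continuity of $g$ produce a limit point $\tilde{a}$ with $\langle g(a^*), a^*-\tilde{a}\rangle = 0$, and the finishing argument of Theorem \ref{the-con} (strict monotonicity plus uniqueness of the NE from Remark \ref{rem-vi}) forces $\tilde{a}=a^*$. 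Consequently $D_{k_r} \to 0$ a.s., and combined with (i) this yields $D_k \to 0$ a.s., i.e., $a_k \to a^*$ a.s.; since $\hat{a}_k = a_k + \delta_k \theta_k$ with $\|\theta_k\|$ bounded and $\delta_k \to 0$, also $\hat{a}_k \to a^*$ a.s. The main obstacle is the random ``burn-in'' index $\tilde{k}(\omega)$ in Lemma \ref{lemma7}: the pathwise estimates only kick in eventually, so one must verify that a finite (but random) prefix of each perturbation series does not spoil the Robbins-Siegmund hypothesis; this is resolved by noting $\tilde{k}(\omega)<\infty$ a.s.\ and applying the theorem on the full-probability set where the asymptotic bounds hold.
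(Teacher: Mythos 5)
Your proposal does not prove the statement it was assigned. The statement is Lemma~\ref{lemma7} itself, i.e., the two pathwise step-size estimates $(a)$ $\gamma_{i,k}\leq 2^{q}/(k^{q}p_{i}^{q})$ and $(b)$ $|\tilde{\gamma}_{i,k}|\leq 2q\epsilon/(p_{i}^{2}k^{1/2+q-\sigma})$ holding for all $k\geq\tilde{k}(\omega)$. What you have written is instead a (reasonable) sketch of the proof of Theorem~\ref{the-con-pi}, in which you invoke Lemma~\ref{lemma7} as a black box --- ``Lemma~\ref{lemma7}(a) yields the pathwise bound\ldots'', ``Lemma~\ref{lemma7}(b) gives\ldots''. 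Using the conclusion of the lemma as an ingredient of its own proof is not a proof of the lemma; the argument is entirely off target (and, read literally, circular). For reference, the paper itself does not prove this lemma either: it is imported verbatim from [Lemma~5] of the cited work \cite{5719290}.

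A proof of the actual statement would have to analyze the random variable $\Gamma_{i}^{k}=\sum_{t=1}^{k}I_{i}^{t}$, a sum of i.i.d.\ Bernoulli$(p_{i})$ variables (Assumption~\ref{ass-zi-ii}). Part $(a)$ follows from the strong law of large numbers: $\Gamma_{i}^{k}/k\to p_{i}$ a.s., so for every $\omega$ off a null set there is $\tilde{k}(\omega)$ with $\Gamma_{i}^{k}\geq kp_{i}/2$ for all $k\geq\tilde{k}(\omega)$, whence $(\Gamma_{i}^{k})^{-q}\leq 2^{q}(kp_{i})^{-q}$. Part $(b)$ requires an almost-sure fluctuation rate, e.g.\ the law of the iterated logarithm giving $|\Gamma_{i}^{k}-kp_{i}|=O(\sqrt{k\log\log k})=o(k^{1/2+\sigma'})$ for any $\sigma'>0$, combined with a mean-value expansion of $x\mapsto x^{-q}$ between $\Gamma_{i}^{k}$ and $kp_{i}$ to convert the deviation of $\Gamma_{i}^{k}$ into the stated deviation of $\gamma_{i,k}$ from its nominal value $1/(k^{q}p_{i}^{q})$; this is where the factor $q$, the power $p_{i}^{-2}$, and the exponent $\frac{1}{2}+q-\sigma$ come from. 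None of this appears in your proposal. (Separately, your sketch of Theorem~\ref{the-con-pi} contains a slip: you require $\sigma\in(0,q-1/2)$ to make $\sum_k k^{-(1/2+q-\sigma)}$ converge, but the exponent $\frac{1}{2}+q-\sigma$ already exceeds $1$ for every $\sigma\in(0,1/2)$ precisely because $q>1/2$, which is the condition the paper uses.)
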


Note that $\tilde{k}(\omega)$ is contingent on the sample path
corresponding to $\sigma$ and $\epsilon$. More precisely, we claim the following:
\begin{align*}
\mathbb{P}\left[\omega: \gamma_{i,k} \leq \frac{2^{q}}{k^{q} p_{i}^{q}}
\text {
	for } k \geqslant \tilde{k}(\omega)\right]=1.
\end{align*}

\begin{proof}[\textbf{Proof of Theorem \ref{the-con-pi}}]
	In the setting, we have
	$\tilde{\gamma}_{i,k}=\gamma_{i,k}-\gamma_k p_{i}^{-q}$,
	where $\gamma_k=k^{-q}$ and $\gamma_{i,k}={1}/{(\Gamma_{i}^{k})^{q}}$.
	Then based on Lemma \ref{lemma-convergence} { and replace 	
		$a_{i}^{'}$ with $a_{i}^{*}$}, we obtain from Lemma
	\ref{lemma7}
	that for any $\sigma\in (0,1/2)$	 and any sufficiently small
	$\epsilon>0$, there exists a   sufficiently large
	$\tilde{k}(\omega)=\tilde{k}(\sigma,\epsilon)$ such that for all
	$k\geq\tilde{k}(\omega)$,
	\begin{align}\label{lem}
	\mathbb{E}\left[D_{k+1}\mid \mathcal{F}_{k} \right] 
	&\leq D_{k}+
	k^{-q}\left\langle
	g (a^{*}), a_{ k}-a^{*} \right\rangle
	\\
	& +\underbrace{\sum_{i \in
			\mathcal{N}}({C_i}+B_{i}L_{i}\sqrt{N}\delta_k)p_{i}^{q-1}
		\frac{2q\epsilon}{p_{i}^{2}k^{\frac{1}{2}+q-\sigma}}
	}_{Term 1}\notag\\
	&+\underbrace{\sum_{i \in
			\mathcal{N}}B_{i}L_{i}\sqrt{N}k^{-q}\delta_{k}}_{Term 2}
	+\underbrace{\sum_{i
			\in
			\mathcal{N}}\frac{G_i p_{i}^{q-1}}{2\delta_k^2}
		\frac{2^{2q}}{k^{2q}p_{i}^{2q}}
	}_{Term 3}.
	\end{align}
	
	Since $q\in(1/2, 1]$  and $\sigma\in(0,1/2)$, we 	have
	$\sum_{k=\tilde{k}+1}^{\infty}  k^{-(\frac{1}{2}+q-\sigma)} <\infty$.  Then by $
	\lim _{k \rightarrow \infty} \delta_{k}=0$,
	we conclude that  $\sum_{k=1}^{\infty}Term 1<\infty,~a.s.$
	Using \eqref{ass-delta}, we achieve  $\sum_{k=1}^{\infty}Term 2<\infty $ and
	$\sum_{k=1}^{\infty}Term 3<\infty .$
	Then by recalling \eqref{vi}  and applying  the Robbins's convergence
	theorem to \eqref{lem}, we have that $D_{k}$ convergences a.s. to some finite 
	random variable $D_{\infty}$ and
	$\sum_{k=1}^{\infty}  k^{-q}\left\langle g(a^{*}),a^{*}-a_k
	\right\rangle<\infty$.
	Since $\sum_{k=1}^{\infty} k^{-q}= \infty$, we have  that
	$\lim\inf_{k\rightarrow\infty}\left\langle
	g(a^{*}),a^{*}-a_k \right\rangle=0$.
	So,  there exists a subsequence
	$\{k_{r}\}$ such that $\lim_{r\rightarrow\infty}\left\langle
	g(a^{*}),a^{*}-a_{k_r} \right\rangle =0$.
	Let $\tilde{a}$ be a limit point of the bounded  sequence $a_{k_{r}}$.
	Then, $\left\langle g ({x}^{*}, a^{*}-\tilde{a})  \right\rangle=0 $. Hence
	$\tilde{a} =a^{*}$ by the strict monotonicity of $g_{i}(a_{i},a_{-i})$
	(Assumption
	\ref{defn-monotone}).  Then $D(a_{k_{r}}, a^{*})$ converges  to
	zero almost surely. By recalling that
	$D(a_{k},a^{*})$   converges almost surely,
	we {{reach the conclusion}} that  
	$D(a_{k},a^{*})\xrightarrow[k\rightarrow\infty]{a.s.}0$.
	Hence, $a_{k}\xrightarrow[k\rightarrow\infty]{a.s.}a^{*}$.
\end{proof}

\section{The Application to Fog Computing }\label{simu}

\subsection{Problem Setting}
The common goal of cloud computing and fog computing is to share resources and
services. Therefore, how to effectively manage and allocate resources has become one
of  the
most important parts of fog computing. We consider a numerical study of the proposed
algorithm for the
resource management game in fog computing with noncooperative service providers.

\begin{figure}[!htb]
	\centering
	\includegraphics[width=3.5in]{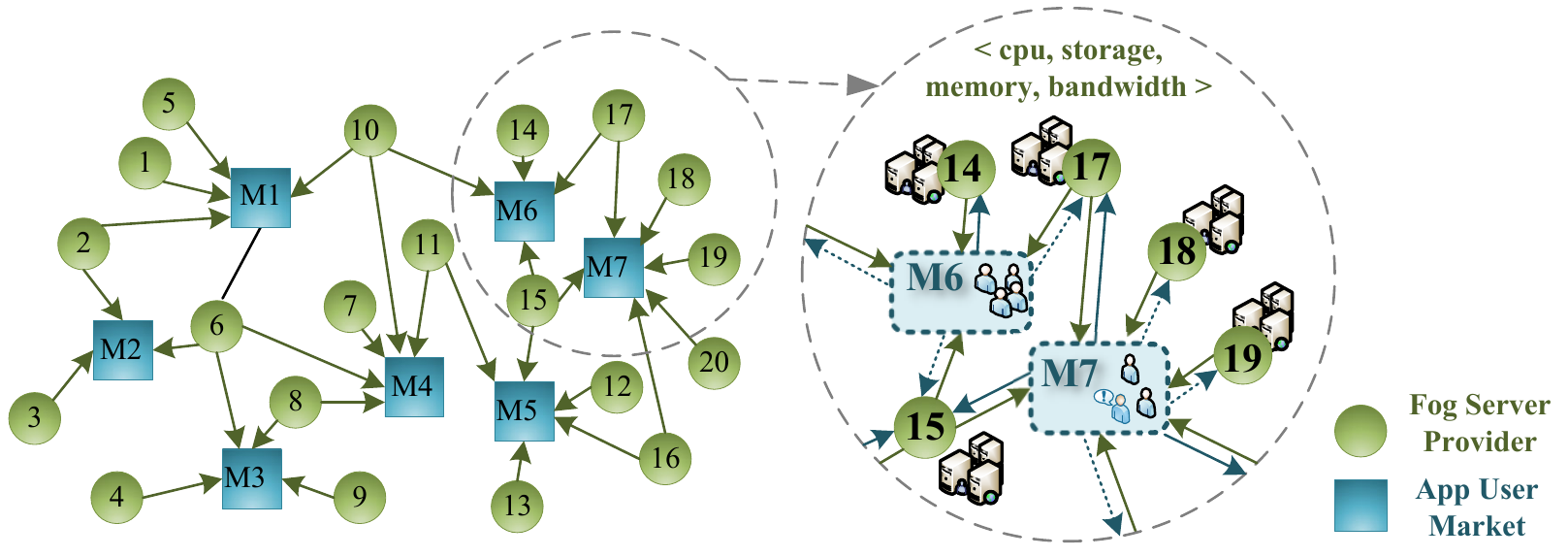}
	\caption{\small Resource
		management  in fog computing.}\label{net}
\end{figure}

Consider 20 fog service  providers (FSPs) and 7 app user markets (AUMs), as shown in 
Figure
\ref{net}. Each FSP can provide
memory, bandwidth, CPU, or storage  to  AUMs.
As a player, each fog server provider needs to determine how many
resources to provide to  app user markets in order to
maximize its own benefits. That is, as a strategy in competition, FSP 
	$i$ 
	provides $a_{i} \in \mathbb{R}^{n_{i}}$ quantity of resources to the $n_{i}$ 
	AUMs it 
	connects.  The connection relationship between the FSP and the AUM is denoted by 
	a matrix $W$, which is a bipartite graph.

The extremely low-information environment is mainly caused by two reasons: cost and
price. The local cost function of FSP $i$ is
$\mathcal{C}_{i}\left(a_{i}\right)$ but the specific form is usually very
complicated. The cost may come from many factors such as hardware, software,
manpower, etc. Operations such as obtaining gradients in such a multi-coupled form
will cause serious computational resource consumption.
Moreover,  what we only know and care about is the value of the cost, so
directly operating on the cost value can reduce the occupation of
computing resources. The price is determined by the relationship between
supply and demand in the market. {\it In real applications, FSP usually cannot know 
the
	specific form of the market pricing function, and in a market-based mechanism,
	resource supply and demand are dynamically changing, {only the value of 
		current price in the market is available to all players.}
	Therefore, after FSP provides some kinds
	of resources to the market, the feedback which can be received from the market is
	their own
	profit value under this strategy.} Overall, FSP compete with each other for 
	market
share to maximize their own profits, that is, in this
networked fog resource
management competition, each FSP
aims to solve
\begin{align*}
\max _{a_{i} \in \mathcal{A}_{i}} (P(W\mathbf{a})-\mathcal{C}_{i}(a_{i}))W_ia_{i}
\end{align*}
given the other providers' profile $\mathbf{a}_{-i}$.

We simulate the two scenarios of known loss probability and unknown 
loss probability, respectively. Throughout this section, the  empirical   
performance  of	OGD-lb  in the expected sense is averaged over 10 paths.
\subsection{Simulations with Known Loss Probability}

We run Algorithm 1 with { $\gamma_{i,k}={1}/{(k^{b}p_{i})}$  and
	$\delta_{k}=k^{-c}$.
	Firstly, we set $b=0.7$, $c=8/25$, $p_{i}=0.6$ and
	display
	the sublinear {expectation-valued} regret  in Figure 
	\ref{fig-no-regret}, which implies that  algorithm OGD-lb meets 
	the 
	no-regret 
	property.  In other words, the online scheme is
	performing at least as well as any static strategy.
	
	\begin{figure}[htbp]
		\centering
		\includegraphics[width=2.3in]{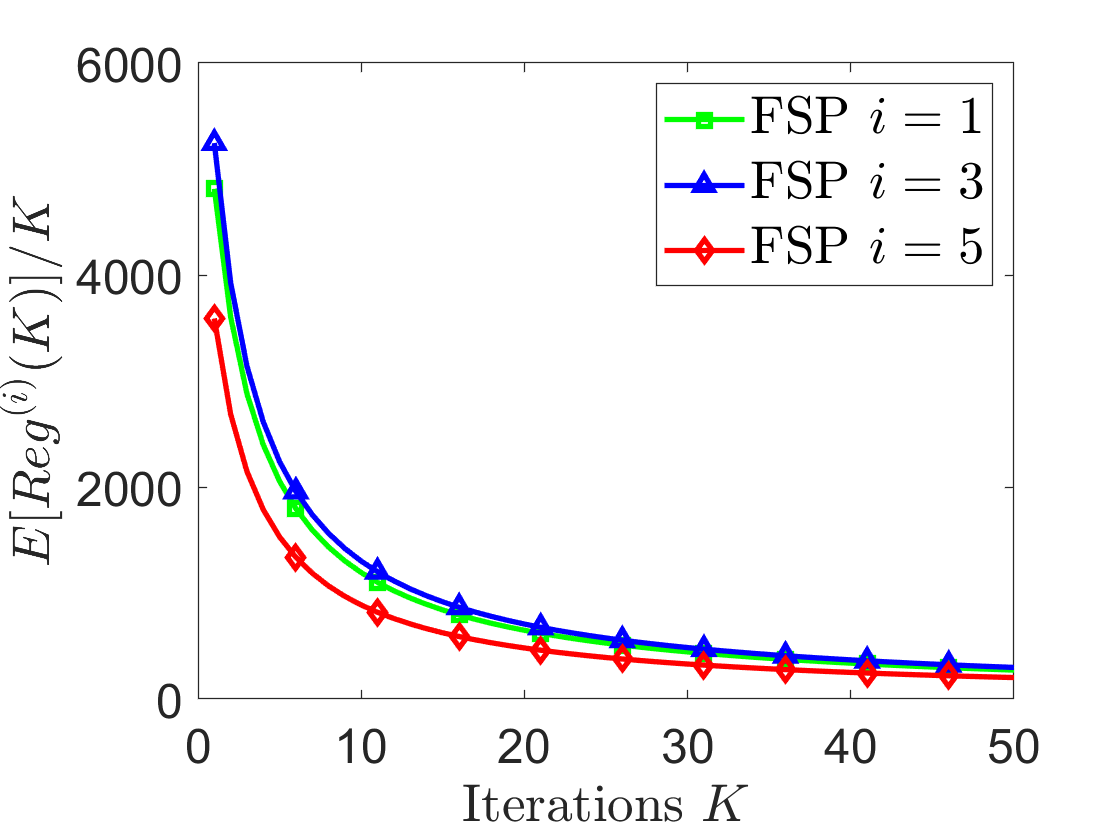}
		\caption{\small {Average regret}
			$\mathbb{E}\left[\mathcal{R}eg^{(i)}(K)\right]/K$ with update 
			probability $p_{i}=0.6, i\in
			\{1,3,5\}$.
		}
		\label{fig-no-regret}
	\end{figure}

	Next, keep $b=0.7$, $c=8/25$ and $p_{i}=0.6$ unchanged. 
	Algorithm 1 is run by a single path and the result is demonstrated 
		in  Figure 
		\ref{x},  which shows that the actions  generated by 
		OGD-lb will converge almost surely to the Nash equilibrium. But 
		due to 
	the lossy
	bandits,
	the	curve will sometimes updated and sometimes unchanged.

	\begin{figure}[htbp]
		\centering
		\includegraphics[width=2.3in]{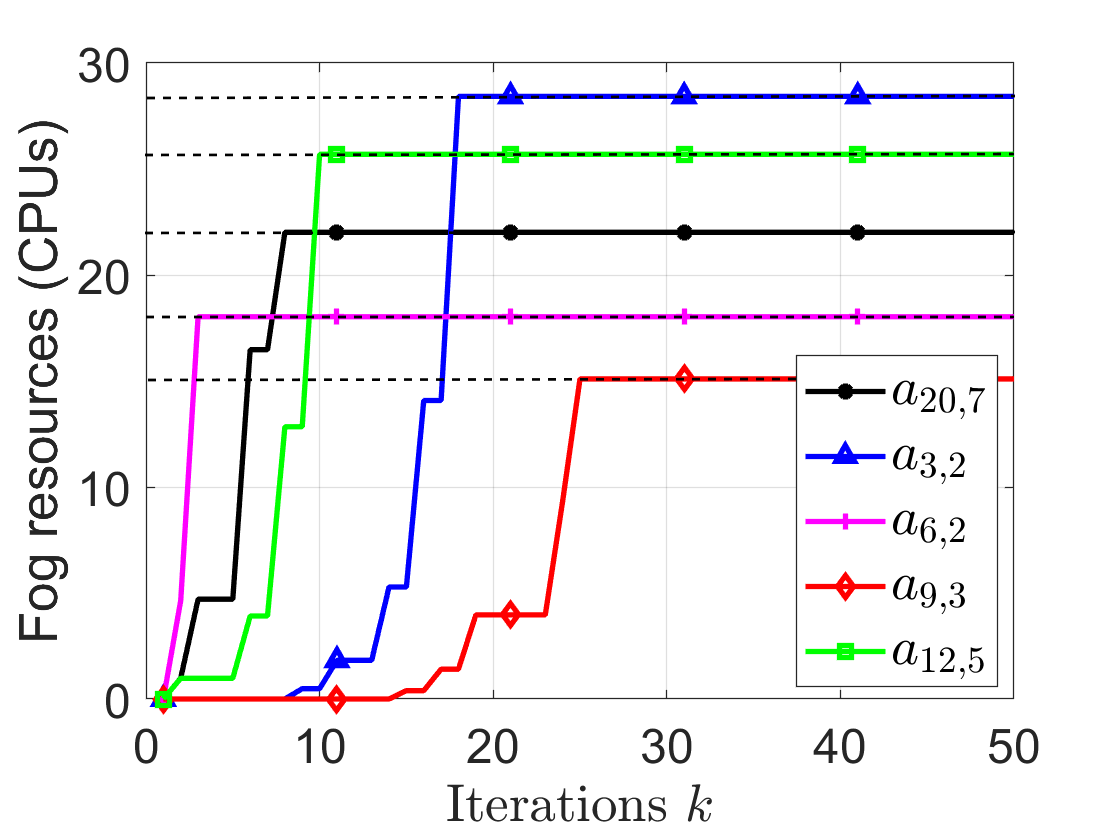}
		\caption{\small The trajectories of the decisions of some fog server
			providers, where
			$a_{i,j}$ represents  the 	amount	of CPUs supplied by FSP $i$
			to AUM $j$, the Nash equilibrium is denoted by the dotted line.}
		\label{x}
	\end{figure}

	{We further keep $c=8/25$   to} explore the influence of
	$p_i$ and   $b$ on the convergence rate of the algorithm.  As shown in Figure
	\ref{rate-p}, the convergence rate increases as $p_{i}$
	decreases. This is because increasing $p_i$ means that the bandit feedback from
	the AUM is more likely to be received by the FSP, that is, the
	algorithm update frequency is increased, and the convergence is accelerated.
	Moreover, we can see from Figure \ref{rate-a} that the convergence
	rate will increase as $b$ decreases. This is because decreasing $b$ will
	increase the update step-size.
	
	\begin{figure}[htbp]
		\centering
		\includegraphics[width=2.3in]{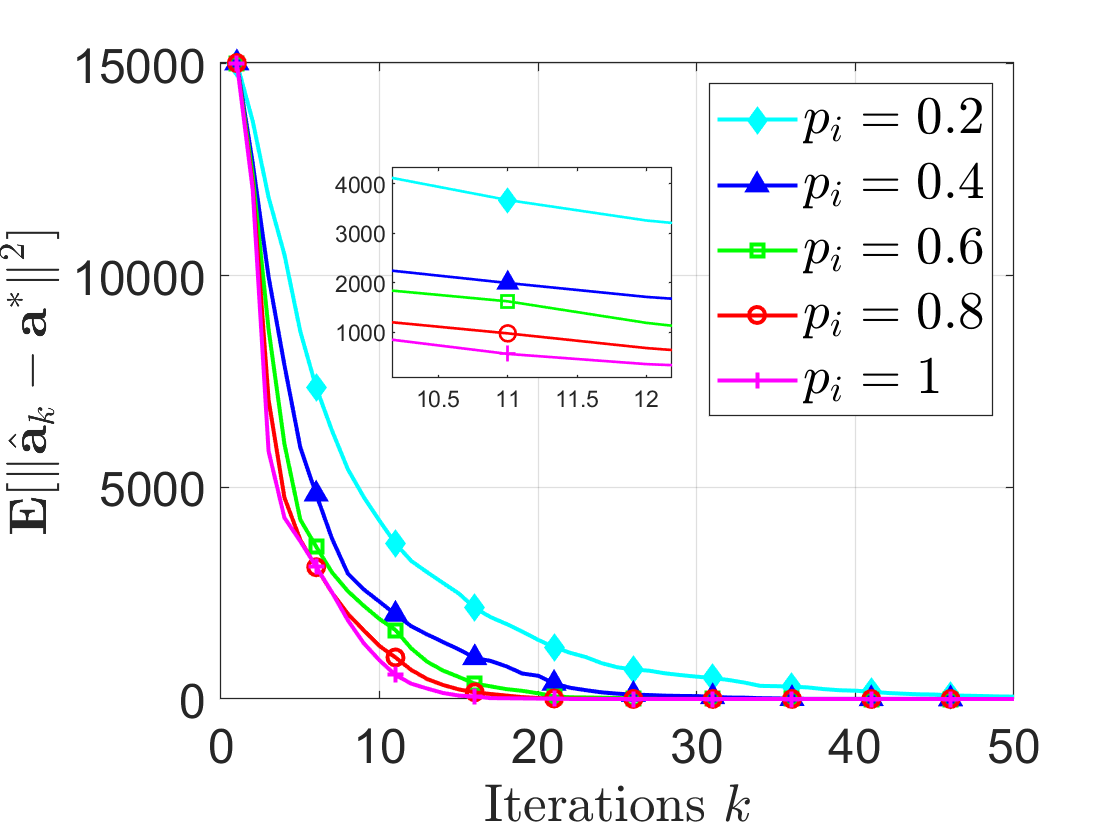}
		\caption{\small The trajectories of $\sum_{i \in \mathcal{N}}
			\mathbb{E}\left[ \|{\hat{a}_{i,k}}-{a^{*}}\|^{2}\right] $ with different
			update probabilities.}
		\label{rate-p}
	\end{figure}
	
	\begin{figure}[htbp]
		\centering
		\includegraphics[width=2.3in]{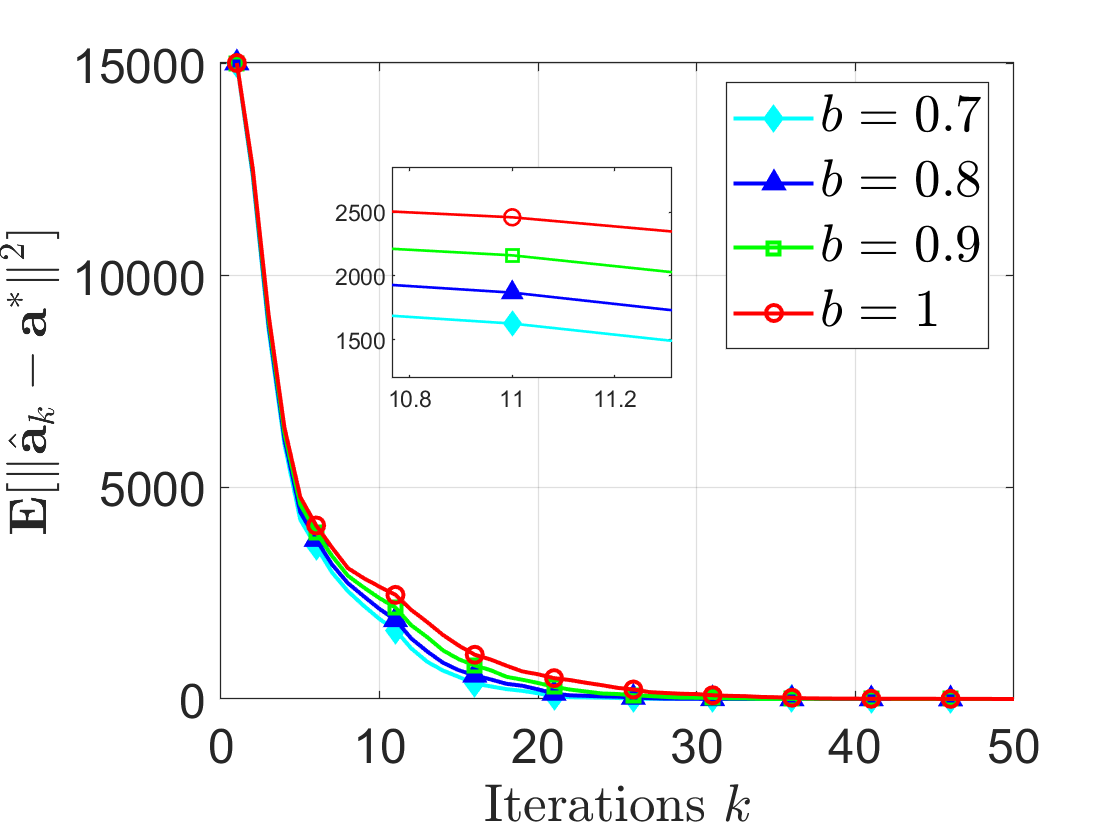}
		\caption{\small The trajectories of $\sum_{i \in \mathcal{N}}
			\mathbb{E}\left[ \|{\hat{a}_{i,k}}-{a^{*}}\|^{2}\right] $ with different
				$b$.}
		\label{rate-a}
	\end{figure}

	Finally, let $\frac{\mathbb{E}\left[\|a_{k}-a^{*}\| \right]}{\|a^{*}\|}\leq
	\varepsilon$ and $p_{i}=P, i\in\mathcal{N}$. We investigate the iterations
	required for the player to reach the specified accuracy $\varepsilon$ under
	different probabilities, and the
	corresponding number of times the feedback information is received.
	It can be seen from Figure \ref{update-bandit} that  when $p_i$ is close to
	$0.8$, the number of iterations required to reach the accuracy $\varepsilon=0.01$
	reaches the
	bottom. Therefore, if human intervention is allowed  in
		applications,  we
		can choose an appropriate update probability (such as $p_{i}=0.8$)
		instead of synchronous updates.
	This will greatly reduce the consumption of
	computing and communication resources.

	\begin{figure}[htbp]
		\centering
		\includegraphics[width=2.3in]{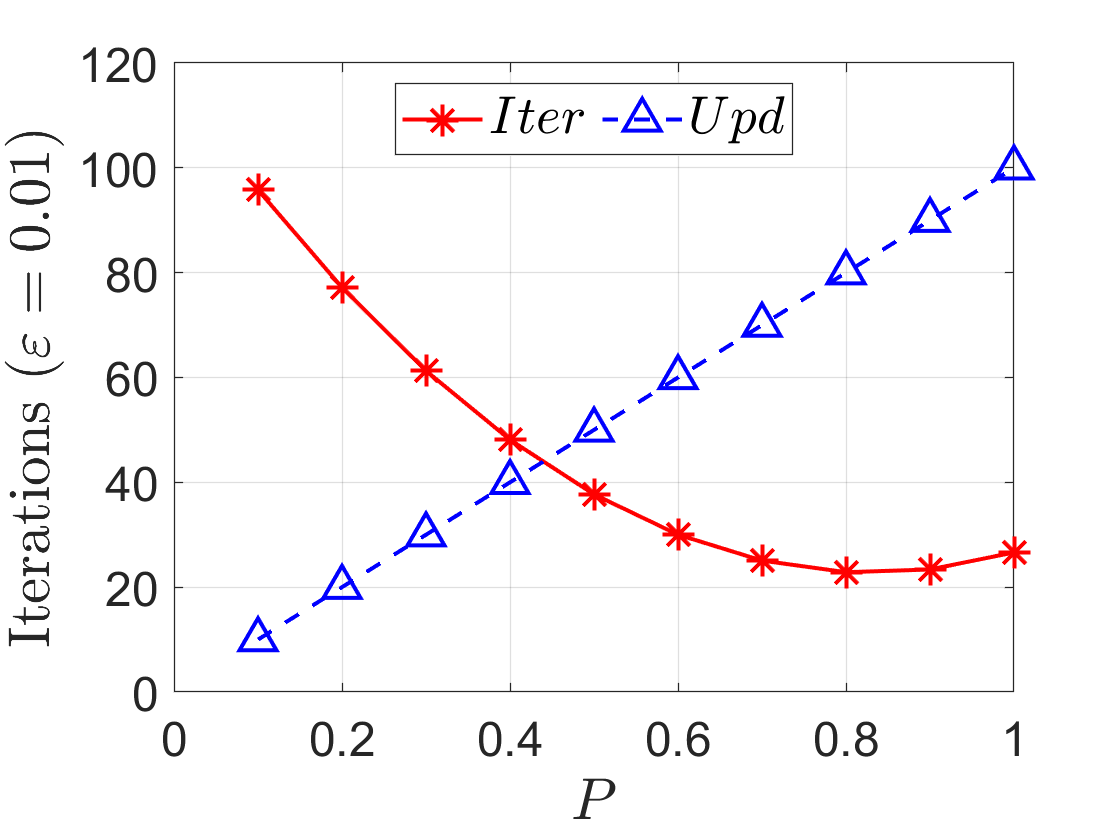}
		\caption{\small Iterations \textit{vs} Update times. (Where
			$p_{i}=P, i\in\mathcal{N}$ and
			$ {\mathbb{E}\left[\|a_{k}-a^{*}\| \right]}/{\|a^{*}\|}\leq
			\varepsilon$.
			\textit{Iter}
			denote the
			iterations required for the players to reach the accuracy of
			$\varepsilon$ under
			different probabilities. \textit{Upd} denote the number of times that
			players update their actions.) }
		\label{update-bandit}
	\end{figure}

	\subsection{Simulations with Unknown Loss Probability}	
	
	Consider Algorithm 1 with  step-size $\gamma_{i,k}={1}/{(\Gamma_{i}^{k})^{q}}$,
	where
	$\Gamma_{i}^{k}=\sum_{t=1}^{k}I_{i}^{t}$, $q\in (1/2, 1]$,
	and perturbation radius $\delta_{k}=k^{-c}$.
	Firstly,  let $q=0.7$ and $c=8/25$. Performing Algorithm 1 with a 
		single path, the result is shown 
		in  Figure 
		\ref{x-pi},  which shows that the actions  generated by 
		OGD-lb converge almost surely to the Nash equilibrium. But 
	due to 
	the lossy
	bandits,
	the	curve will sometimes updated and sometimes unchanged.

	\begin{figure}[htbp]
		\centering
		\includegraphics[width=2.3in]{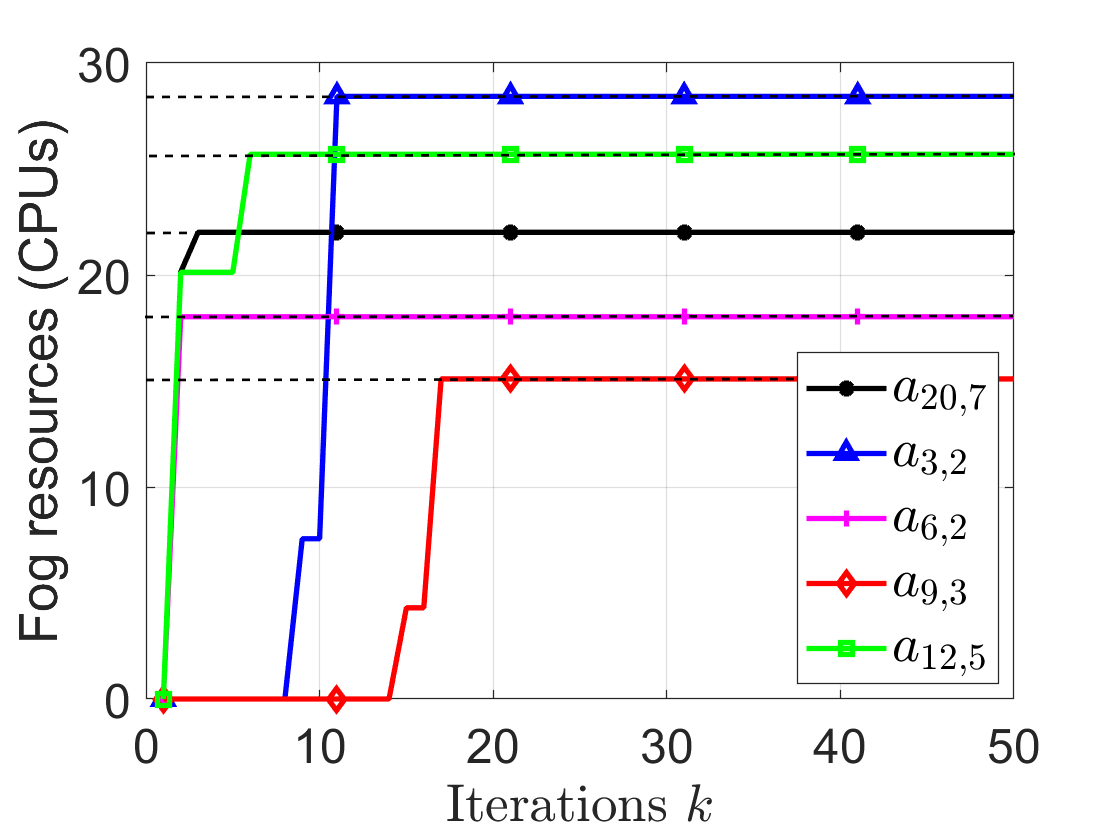}
		\caption{\small The trajectories of the decisions of some fog server
			providers, where
			$a_{i,j}$ represents  the 	amount	of CPUs supplied by FSP $i$
			to AUM $j$, the Nash equilibrium is denoted by the dotted line.}
		\label{x-pi}
	\end{figure}

	Next,  we explore the regret and convergence rate of the algorithm in the
	unknown bandit feedback probability situation through simulations.
	We set $q=0.7$ and $c=8/25$, and display the {expectation-valued} 
	regret versus the time
	horizon
	$K$ in Figure \ref{fig-no-regret-pi},  which shows that {the 
		average 
		regret}  converges
	sub-linearly, i.e., OGD-lb is a no-regret algorithm.

	\begin{figure}[htbp]
		\centering
		\includegraphics[width=2.3in]{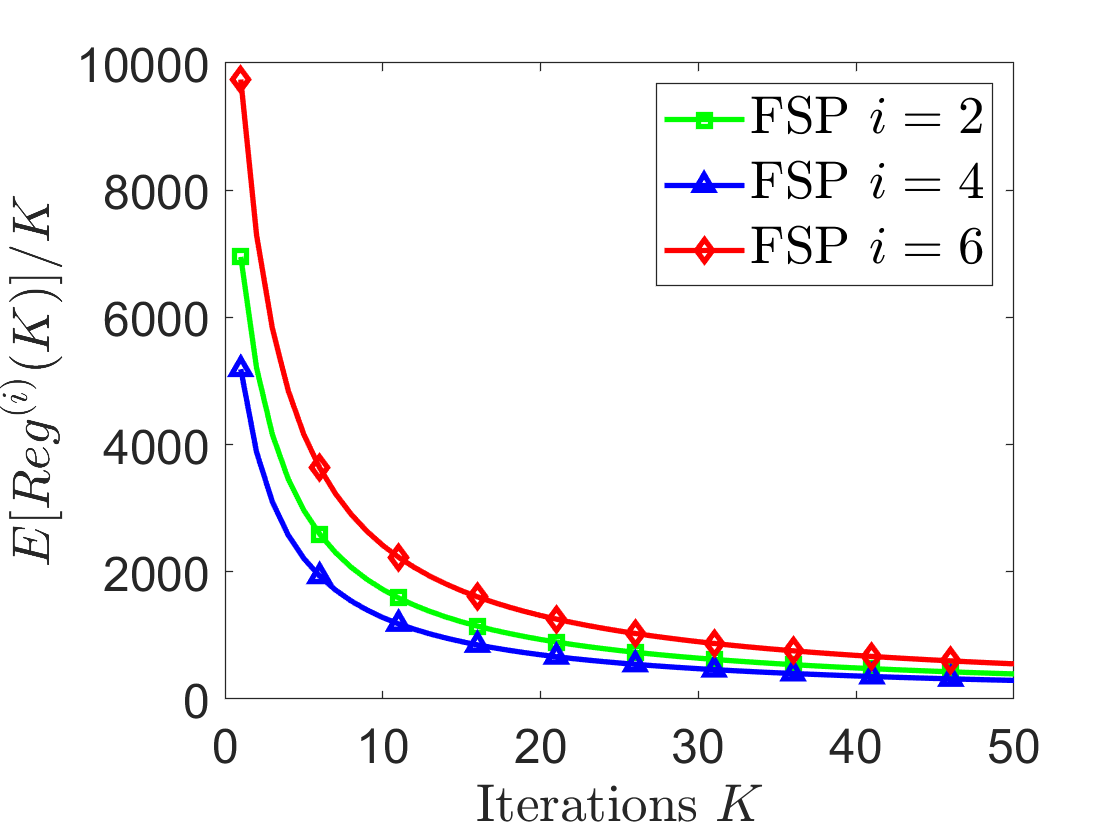}
		\caption{\small {Average regret}
			$\mathbb{E}\left[\mathcal{R}eg^{(i)}(K)\right]/K$ with update 
			probability  $p_{i}=0.8, i\in 
			\{2,4,6\}$.
		}
		\label{fig-no-regret-pi}
	\end{figure}

	Then we set $c=8/25$ and investigate how do  $q$ and $p_i$ influence  the
		algorithm performance.
		It is seen from Figure \ref{rate-a-pi} that the convergence
		rate  increases as $q$ decreases. This is because decreasing $q$ 
	increases the update step-size.
	We can see from Figure \ref{rate-p-pi} that the
	convergence rate increases as $p_{i}$
	decreases. This is because increasing $p_i$ increases the probability of FSP
	receiving feedback from the AUM, which increases the frequency of
	algorithm updates and accelerates convergence.

	\begin{figure}[htbp]
		\centering
		\includegraphics[width=2.3in]{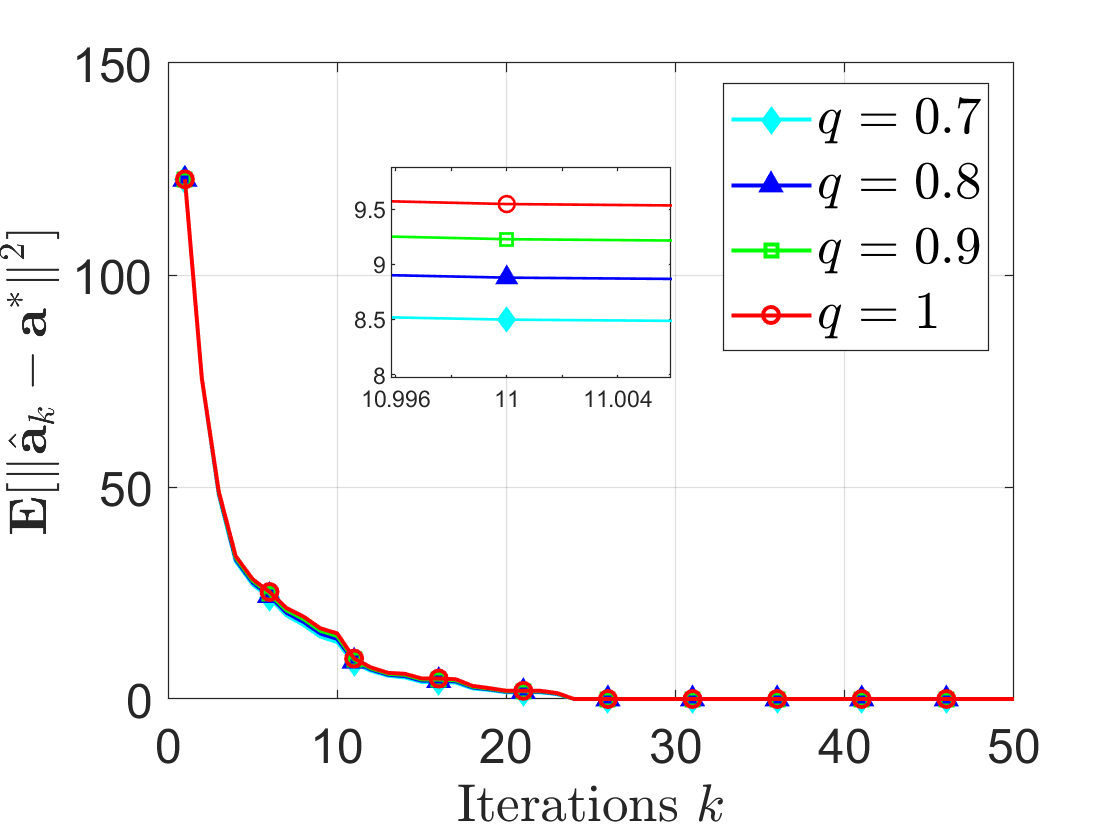}
		\caption{\small The trajectories of $\sum_{i \in \mathcal{N}}
			\mathbb{E}\left[ \|{\hat{a}_{i,k}}-{a^{*}}\|^{2}\right] $ with different
			 $q$.}
		\label{rate-a-pi}
	\end{figure}

	\begin{figure}[htbp]
		\centering
		\includegraphics[width=2.3in]{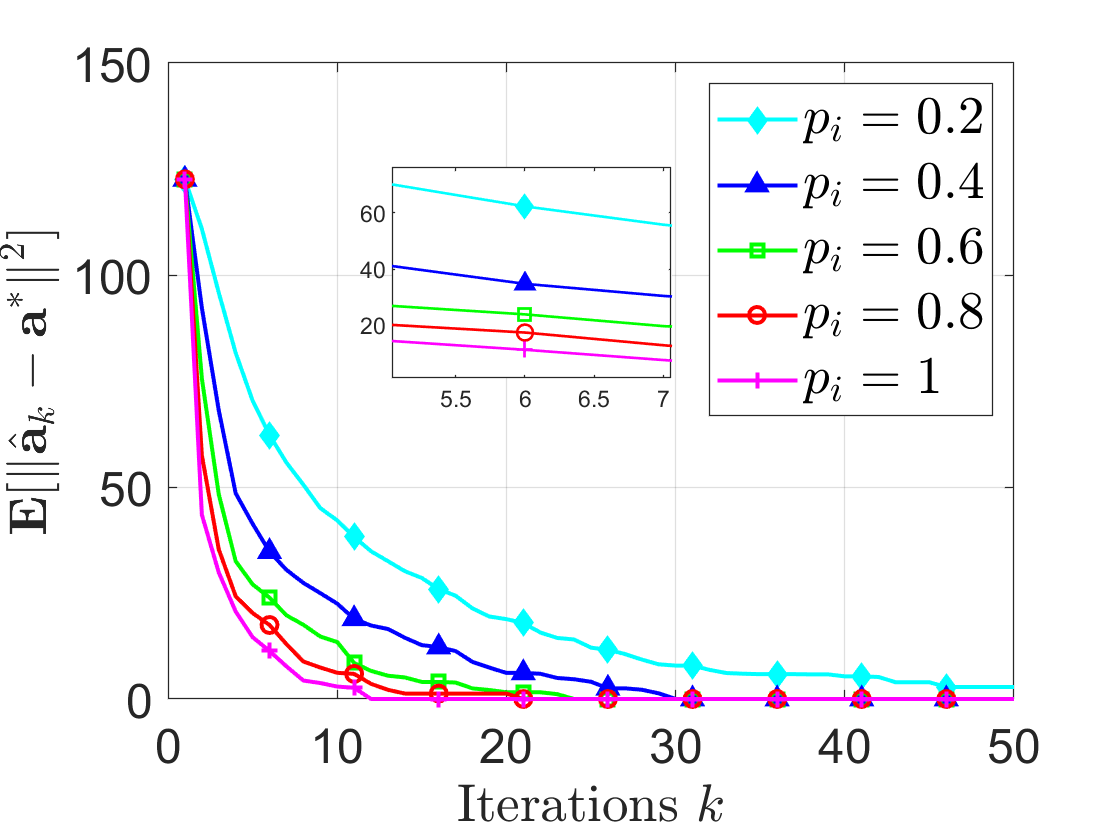}
		\caption{\small The trajectories of $\sum_{i \in \mathcal{N}}
			\mathbb{E}\left[ \|{\hat{a}_{i,k}}-{a^{*}}\|^{2}\right] $ with different
			update probabilities.}
		\label{rate-p-pi}
	\end{figure}

	\section{Conclusion}\label{conclu}
	
	This paper  considered  bandit online learning  for 
		repeated stage games and proposed a novel no-regret algorithm called
	Online Gradient Descent with lossy	bandits (OGD-lb).
	For concave games, 
		we demonstrated that the algorithm meets the no-regret property with a 
		proper 
		selection 
		of step-size. Furthermore, we showed that for strictly
		monotone games,  the actions  generated by OGD-lb can converge to a 
		Nash 
		equilibrium with probability 1
	even when
	the bandit loss probability is unknown.
	Moreover, we  derived an upper
	bound
	of the convergence rate for strongly
	monotone games, which can reach the same order  of the
	algorithm without information loss.
	Finally, we applied the
	proposed method to the resource management game in fog computing.



  \bibliographystyle{unsrt}
  \bibliography{ref}

\end{document}